%%%%%%%%%%%%%%%%%%%%%%%%%%%%%%%%%%%%%%%%%%%%%%%%%%%%%%%%%%%%%%%%%%
%%%%%%%% ICML 2017 EXAMPLE LATEX SUBMISSION FILE %%%%%%%%%%%%%%%%%
%%%%%%%%%%%%%%%%%%%%%%%%%%%%%%%%%%%%%%%%%%%%%%%%%%%%%%%%%%%%%%%%%%

% Use the following line _only_ if you're still using LaTeX 2.09.
%\documentstyle[icml2017,epsf,natbib]{article}
% If you rely on Latex2e packages, like most moden people use this:
\documentclass{article}
\pdfoutput=1
% use Times
\usepackage{times}
% For figures
\usepackage{graphicx} % more modern
\usepackage{subcaption}

% For citations
%\usepackage{natbib}

% For algorithms
\usepackage{algorithm}
\usepackage{algorithmic}

% Packages not in the original template
\usepackage{amsmath,amsthm,amssymb,amsfonts,bbm}
\usepackage{bm}
\usepackage{tikz}
\usetikzlibrary{calc, positioning}
%\newtheoremstyle{break}
%  {\topsep}{\topsep}%
%  {\itshape}{}%
%  {\bfseries}{}%
%  {\newline}{}%
%\theoremstyle{break}
\newtheorem{theorem}{Theorem} 

\newtheorem{prop}[theorem]{Proposition}
\newtheorem{hyp}[theorem]{Hypothesis}
\newtheorem{cor}[theorem]{Corollary} 
\usepackage[siunitx]{circuitikz}

\newcommand\numberthis{\addtocounter{equation}{1}\tag{\theequation}}
\usepackage{calc}
\usepackage{float}
%\usepackage{geometry}
% \geometry{
% a4paper,
% total={170mm,257mm},
% left=30mm,
% right=30mm,
% top=20mm,
% }

% As of 2011, we use the hyperref package to produce hyperlinks in the
% resulting PDF.  If this breaks your system, please commend out the
% following usepackage line and replace \usepackage{icml2017} with
% \usepackage[nohyperref]{icml2017} above.
%\usepackage[colorlinks = true, pdfstartview = FitV, linkcolor = blue, citecolor = blue, urlcolor = blue]{hyperref}
%\usepackage[sort,nocompress]{cite}
%\usepackage[sort,nocompress,space]{cite}
%\usepackage[draft]{hyperref}
\usepackage{hyperref}

% Packages hyperref and algorithmic misbehave sometimes.  We can fix
% this with the following command.

\newcommand{\rone}[1]{\textcolor{black}{#1}}
\newcommand{\rtwo}[1]{\textcolor{black}{#1}}

\newcommand{\rsix}[1]{\textcolor{black}{#1}}
\newcommand{\change}[1]{\textcolor{black}{#1}}

% Employ the following version of the ``usepackage'' statement for
% submitting the draft version of the paper for review.  This will set
% the note in the first column to ``Under review.  Do not distribute.''
%\usepackage{icml2018} 
\usepackage[accepted]{icml2018}

% The \icmltitle you define below is probably too long as a header.
% Therefore, a short form for the running title is supplied here:
\icmltitlerunning{Invariance of Weight Distributions in Rectified MLPs}

\begin{document} 
\allowdisplaybreaks
\twocolumn[
\icmltitle{Invariance of Weight Distributions in Rectified MLPs}

% It is OKAY to include author information, even for blind
% submissions: the style file will automatically remove it for you
% unless you've provided the [accepted] option to the icml2017
% package.

% list of affiliations. the first argument should be a (short)
% identifier you will use later to specify author affiliations
% Academic affiliations should list Department, University, City, Region, Country
% Industry affiliations should list Company, City, Region, Country

% you can specify symbols, otherwise they are numbered in order
% ideally, you should not use this facility. affiliations will be numbered
% in order of appearance and this is the preferred way.
%\icmlsetsymbol{equal}{*}

\begin{icmlauthorlist}
\icmlauthor{Russell Tsuchida}{to1}
\icmlauthor{Farbod Roosta-Khorasani}{to2,to3}
\icmlauthor{Marcus Gallagher}{to1}
\end{icmlauthorlist}

\icmlaffiliation{to1}{School of ITEE, University of Queensland, Brisbane, Queensland, Australia}
\icmlaffiliation{to2}{School of Mathematics and Physics, University of Queensland, Brisbane, Queensland, Australia}
\icmlaffiliation{to3}{International Computer Science Institute, Berkeley, California, USA}

\icmlcorrespondingauthor{Russell Tsuchida}{s.tsuchida@uq.edu.au}
\icmlcorrespondingauthor{Farbod Roosta-Khorasani}{fred.roosta@uq.edu.au}
\icmlcorrespondingauthor{Marcus Gallagher}{marcusg@uq.edu.au}

% You may provide any keywords that you 
% find helpful for describing your paper; these are used to populate 
% the "keywords" metadata in the PDF but will not be shown in the document
\icmlkeywords{Neural Network, kernel, initialization, expressivity}

\vskip 0.3in
]

% this must go after the closing bracket ] following \twocolumn[ ...

% This command actually creates the footnote in the first column
% listing the affiliations and the copyright notice.
% The command takes one argument, which is text to display at the start of the footnote.
% The \icmlEqualContribution command is standard text for equal contribution.
% Remove it (just {}) if you do not need this facility.

\printAffiliationsAndNotice{}  % leave blank if no need to mention equal contribution
%\printAffiliationsAndNotice{\icmlEqualContribution} % otherwise use the standard text.

%\title{Invariance of Weight Distributions in Rectified MLPs}
%\author{
%  Russell Tsuchida \footnote{School of ITEE, University of Queensland, Brisbane, Australia, Email:  s.tsuchida@uq.edu.au} \\
%  \and
%  Farbod Roosta-Khorasani \footnote{School of Mathematics and Physics, University of Queensland, Brisbane, Australia, and International
%Computer Science Institute, Berkeley, USA, Email:  fred.roosta@uq.edu.au} \\
%  \and
%  Marcus Gallagher \footnote{School of ITEE, University of Queensland, Brisbane, Australia, Email:  marcusg@uq.edu.au} \\
%}
%maketitle

\begin{abstract} 
An interesting approach to analyzing neural networks that has received renewed attention is to examine the equivalent kernel of the neural network. This is based on the fact that a fully connected feedforward network with one hidden layer, a certain weight distribution, an activation function, and an infinite number of neurons can be viewed as a mapping into a Hilbert space. We derive the equivalent kernels of MLPs with ReLU or Leaky ReLU activations for all rotationally-invariant weight distributions, generalizing a previous result that required Gaussian weight distributions. Additionally, the Central Limit Theorem is used to show that for certain activation functions, kernels corresponding to layers with weight distributions having $0$ mean and finite absolute third moment are asymptotically universal, and are well approximated by the kernel corresponding to layers with spherical Gaussian weights. In deep networks, as depth increases the equivalent kernel approaches a pathological fixed point, which can be used to argue why training randomly initialized networks can be difficult. Our results also have implications for weight initialization.
\end{abstract} 

\section{Introduction}
\label{s:intro}
Neural networks have recently been applied to a number of diverse problems with impressive results~\cite{vanwavenet, silver2017mastering,berthelot2017began}. These breakthroughs largely appear to be driven by application rather than an understanding of the capabilities and training of neural networks. Recently, significant work has been done to increase understanding of neural networks~\cite{choromanska2015loss, haeffele2015global, poole2016exponential, schoenholz2016deep, zhang2016understanding, martin2017rethinking, shwartz2017opening, pmlr-v70-balduzzi17b, pmlr-v70-raghu17a}. However, there is still work to be done to bring theoretical understanding in line with the results seen in practice.

The connection between neural networks and kernel machines has long been studied~\cite{neal1995bayesian}. Much past work has been done to investigate the equivalent kernel of certain neural networks, either experimentally~\cite{burgess1997estimating}, through sampling ~\cite{sinha2016learning, livni2017learning, lee2017gp}, or analytically by assuming some random distribution over the weight parameters in the network~\cite{williams1997computing, cho2009kernel, pandey2014go, pandey2014learning, daniely2016toward, JMLR:v18:14-546}. Surprisingly, in the latter approach, rarely have distributions other than the Gaussian distribution been analyzed. This is perhaps due to early influential work on Bayesian Networks~\cite{mackay1992practical}, which laid a strong mathematical foundation for a Bayesian approach to training networks. Another reason may be that some researchers may hold the intuitive (but \emph{not necessarily principled}) view that the Central Limit Theorem (CLT) should somehow apply.

In this work, we investigate the equivalent kernels for networks with Rectified Linear Unit (ReLU), Leaky ReLU (LReLU) or other activation functions, one-hidden layer, and more general weight distributions. Our analysis carries over to deep networks. We investigate the consequences that weight initialization has on the equivalent kernel at the beginning of training. While initialization schemes that mitigate exploding/vanishing gradient problems~\cite{hochreiter1991untersuchungen, bengio1994learning, hochreiter2001gradient} for other activation functions and weight distribution combinations have been explored in earlier works~\cite{glorot2010understanding, he2015delving}, we discuss an initialization scheme for Muli-Layer Perceptrons (MLPs) with LReLUs and weights coming from distributions with $0$ mean and finite absolute third moment. The derived kernels also allow us to analyze the loss of information as an input is propagated through the network, offering a complementary view to the shattered gradient problem~\cite{pmlr-v70-balduzzi17b}. 

\section{Preliminaries}
Consider a fully connected (FC) feedforward neural network with $m$ inputs and a hidden layer with $n$ neurons. Let $\sigma: \mathbb{R} \to \mathbb{R}$ be the activation function of all the neurons in the hidden layer. \iffalse We have a single hidden layer network, as shown in Figure~\ref{fig:network}.
\begin{figure}[!htbp]
\centering
\begin{tikzpicture}[
roundnode/.style={circle, draw=black!, fill=white!5, very thick, minimum size=9mm},
]
% First layer
%Nodes
\node				    (0)        				{};
\node[roundnode]        (1)       [right=of 0]	{$x_1$};
\node[roundnode]      	(2)       [right=of 1] 	{$x_2$};
\node[roundnode]    	(3)       [right=of 2]	{$x_m$};

% Dots
\node (4) at ($(0)!.5!(1)$) {};
\node (5) at ($(1)!.5!(2)$) {};
\node (6) at ($(2)!.5!(3)$) {\ldots};

%Second layer
\node[roundnode]        (7)       [above=of 4]    		{$\sigma_1$};
\node[roundnode]        (8)       [above=of 5]    		{$\sigma_2$};
\node[roundnode]        (9)       [right=of 8]    		{$\sigma_3$};
\node[roundnode]        (10)      [right=of 9]  	  	{$\sigma_n$};

%Dots
\node (11) at ($(9)!.5!(10)$) {\ldots};

%Lines
\foreach \x in {1, ..., 3}
	\foreach \y in {7, ..., 10}
		\draw[line width=1.0] (\x.north) -- (\y.south);

%\draw[->] (rightsquare.south) .. controls +(down:7mm) and +(right:7mm) %.. (lowercircle.east);
\end{tikzpicture}
\caption{An FC feedforward network with one hidden layer.}
\label{fig:network}
\end{figure}
\fi
Further assume that the biases are $0$, as is common when initializing neural network parameters. For any two inputs $\mathbf{x}, \mathbf{y} \in \mathbb{R}^m$ propagated through the network, the dot product in the hidden layer is
\begin{align}
\label{eq:sum}
\frac{1}{n}\bm{h}(\mathbf{x})\cdot \bm{h}(\mathbf{y}) &= \frac{1}{n} \sum_{i=1}^n \sigma( \mathbf{w_i} \cdot \mathbf{x} ) \sigma( \mathbf{w_i} \cdot \mathbf{y} ),
\end{align}
where $\bm{h}( \cdot )$ denotes the $n$ dimensional vector in the hidden layer and $\mathbf{w_i} \in \mathbb{R}^m$ is the weight vector into the $i^{th}$ neuron.
Assuming an infinite number of hidden neurons, the sum in~\eqref{eq:sum} has an interpretation as an inner product in feature space, which corresponds to the kernel of a Hilbert space. We have
\begin{align}
\label{kernel}
k(\mathbf{x}, \mathbf{y}) &= \int_{\mathbb{R}^m} \sigma( \mathbf{w} \cdot \mathbf{x} ) \sigma( \mathbf{w} \cdot \mathbf{y} ) f(\mathbf{w}) \,\mathbf{dw},
\end{align}
where $f(\mathbf{w})$ is the probability density function (PDF) for the identically distributed weight vector $\bm{W}=(W_1, ..., W_m)^T$ in the network. The connection of~\eqref{kernel} to the kernels in kernel machines is well-known~\cite{neal1995bayesian,williams1997computing,cho2009kernel}. 

\rsix{Probabilistic bounds for the error between~\eqref{eq:sum} and~\eqref{kernel} have been derived in special cases~\cite{rahimi2008random} when the kernel is shift-invariant. Two specific random feature mappings are considered: \textbf{(1)} Random Fourier features are taken for the $\sigma$ in \eqref{eq:sum}. Calculating the approximation error in this way requires being able to sample from the PDF defined by the Fourier transform of the target kernel. More explicitly, the weight distribution $f$ is the Fourier transform of the target kernel and the $n$ samples $\sigma(\mathbf{w_i} \cdot \mathbf{x})$ are replaced by some appropriate scale of $\cos(\mathbf{w_i} \cdot \mathbf{x})$. \textbf{(2)} A random bit string $\sigma(\mathbf{x_i})$ is associated to each input according to a grid with random pitch $\delta$ sampled from $f$ imposed on the input space. This method requires having access to the second derivative of the target kernel to sample from the distribution $f$.}

\rsix{Other work~\cite{bach2017equivalence} has focused on the smallest error between a target function $g$ in the reproducing kernel Hilbert space (RKHS) defined by~\eqref{kernel} and an approximate function $\hat{g}$ expressible by the RKHS with the kernel~\eqref{eq:sum}. More explicitly, let $g(x) = \int_{\mathbb{R}^m} G(\mathbf{w}) \sigma(\mathbf{w}, \mathbf{x}) f(\mathbf{w}) \,d\mathbf{w}$ be the representation of $g$ in the RKHS. The quantity $\big\Vert \hat{g} - g \big\Vert = \big\Vert \sum_{i=1}^n \alpha_i \sigma(\mathbf{w_i}, \cdot) - \int_{\mathbb{R}^m} G(\mathbf{w}) \sigma(\mathbf{w}, \cdot)  f(\mathbf{w}) \,d\mathbf{w} \big\Vert$ (with some suitable norm) is studied for the best set of $\alpha_i$ and random $\mathbf{w_i}$ with an optimized distribution.}

\rsix{Yet another measure of kernel approximation error is investigated by Rudi \& Rosasco~\yrcite{rudi2017generalization}. Let $\hat{g}$ and $g$ be the optimal solutions to the ridge regression problem of minimizing a regularized cost function $C$ using the kernel~\eqref{eq:sum} and the kernel~\eqref{kernel} respectively. The number of datapoints $n$ required to probabilistically bound $C(\hat{g}) - C(g)$ is found to be $O(\sqrt{n} \log n)$ under a suitable set of assumptions. This work notes the connection between kernel machines and one-layer Neural Networks with ReLU activations and Gaussian weights by citing Cho \& Saul~\yrcite{cho2009kernel}. We extend this connection by considering other weight distributions and activation functions.}

\rsix{In this work our focus is on deriving expressions for the target kernel, not the approximation error. Additionally, we consider random mappings that have not been considered elsewhere. Our work is related to work by Poole et al.~\yrcite{poole2016exponential} and Schoenholz et al.~\yrcite{schoenholz2016deep}. However, our results apply to the unbounded (L)ReLU activation function and more general weight distributions, and their work considers random biases as well as weights.}

\section{Equivalent Kernels for Infinite Width Hidden Layers}
The kernel~\eqref{kernel} has previously been evaluated for a number of choices of $f$ and $\sigma$ \cite{williams1997computing, le2007continuous, cho2009kernel, pandey2014go, pandey2014learning}. In particular, the equivalent kernel for a one-hidden layer network with spherical Gaussian weights of variance $\mathbb{E}[W_i^2]$ and mean $0$ is the Arc-Cosine Kernel~\cite{cho2009kernel}
\iffalse
The known kernels are summarized in Table~\ref{tab:kernels}.

\iffalse
\begin{tabular}{ | c | c | c | } \hline
  & \textbf{Gaussian} & \textbf{Uniform} \hline \\ 
 \textbf{Gaussian} & \cite{williams1997computing} & \\
 \textbf{Sigmoid} & \cite{williams1997computing} &  \\
 \textbf{sgn} & & \cite{le2007continuous} \\  
 \textbf{ReLU} &  \cite{cho2009kernel, pandey2014learning, pandey2014go} &   \hline 
\end{tabular}
\fi

\begin{table}[!htbp]
\centering
\caption{Known equivalent kernels.}
\begin{tabular}{ | c | c | c |} \hline
  & \textbf{Gaussian} & \textbf{Uniform} \\ \hline
  \textbf{Gaussian} & \checkmark & \\
  \textbf{Sigmoid} & \checkmark & \\
  \textbf{sgn} & & \checkmark \\
  \textbf{ReLU} & \checkmark & \\ \hline
\end{tabular}
\label{tab:kernels}
\end{table}
\fi
\begin{equation}
k(\mathbf{x}, \mathbf{y}) = \frac{\mathbb{E}[W_i^2] \Vert \mathbf{x} \Vert \Vert \mathbf{y} \Vert }{2\pi} \big( \sin\theta_0 + (\pi-\theta_0)\cos\theta_0 \big),
\label{eq:chokernel}
\end{equation}
where $\theta_0 = \cos^{-1} \big( \frac{\mathbf{x} \cdot \mathbf{y}}{\Vert \mathbf{x} \Vert \Vert \mathbf{y} \Vert} \big)$ is the angle between the inputs $\mathbf{x}$ and $\mathbf{y}$ and $\Vert \cdot \Vert$ denotes the $\ell^2$ norm. Noticing that the Arc-Cosine Kernel $k(\mathbf{x}, \mathbf{y})$ depends on $\mathbf{x}$ and $\mathbf{y}$ only through their norms, with an abuse of notation we will henceforth set $k(\mathbf{x}, \mathbf{y}) \equiv k(\theta_0).$ Define the \textit{normalized kernel} to be the cosine similarity between the signals in the hidden layer. The normalized Arc-Cosine Kernel is given by
\small
$$\cos\theta_1 =\frac{k(\mathbf{x}, \mathbf{y})}{\sqrt{k(\mathbf{x}, \mathbf{x})} \sqrt{k(\mathbf{y}, \mathbf{y})}} = \frac{1}{\pi} \big( \sin\theta_0 + (\pi - \theta_0) \cos\theta_0 \big),$$
\normalsize
where $\theta_1$ is the angle between the signals in the first layer. Figure~\ref{fig:arccosker} shows a plot of the normalized Arc-Cosine Kernel. 
\vspace{-1.8em}
\begin{figure}[!htbp]
\centering
\includegraphics[scale=0.12]{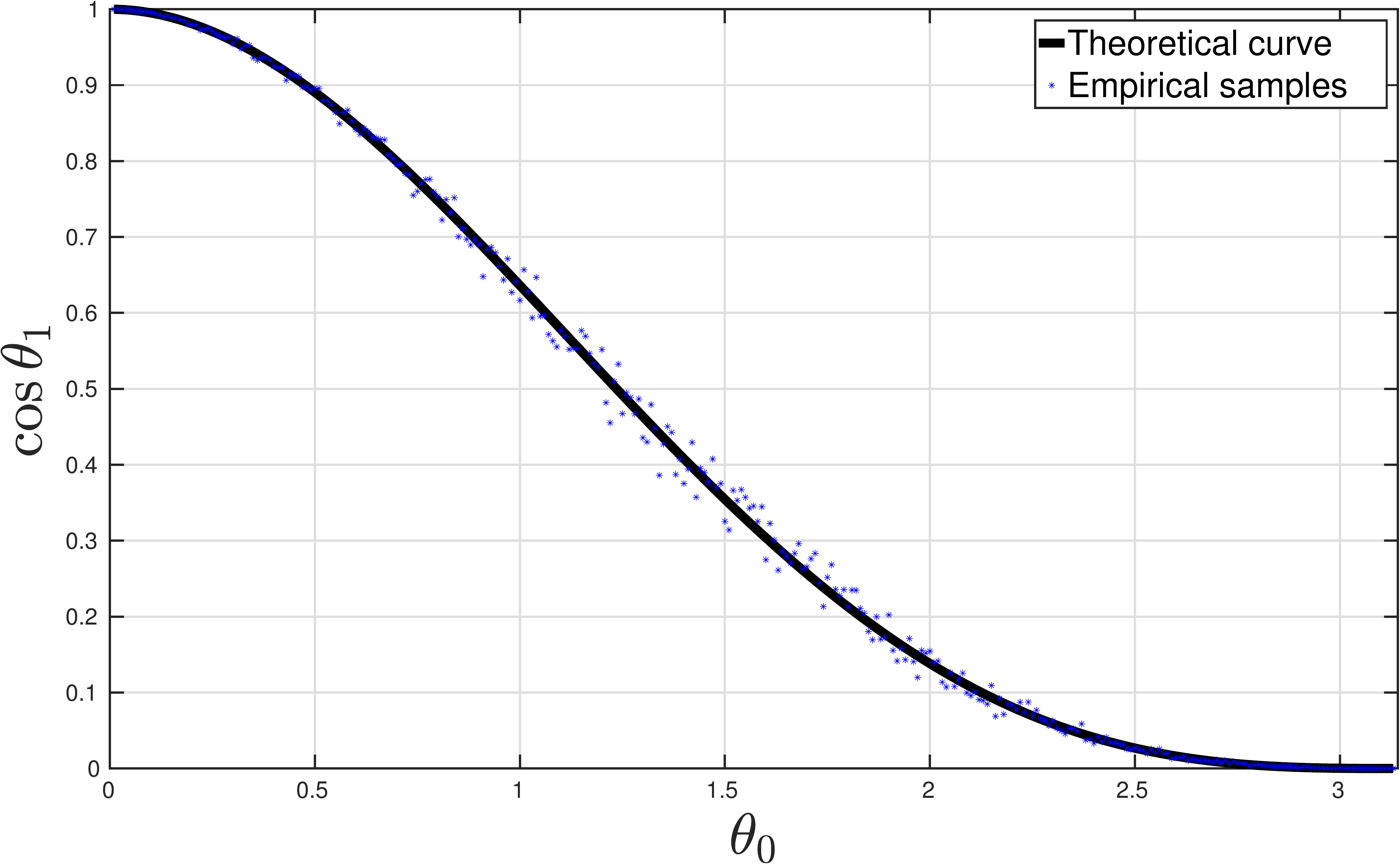}
\caption{Normalized Arc-Cosine Kernel as a function of $\theta_0$ for a single hidden layer network, Gaussian weights, and ReLU activations. Empirical samples from a network with $1000$ inputs and $1000$ hidden units are plotted alongside the theoretical curve. Samples are obtained by generating $R$ from a $QR$ decomposition of a random matrix, then setting $\mathbf{x}=R^T(1, 0,...,0)^T$ and $\mathbf{y}=R^T(\cos\theta, \sin\theta, 0,...,0)^T$.}
\label{fig:arccosker}
\end{figure}

One might ask how the equivalent kernel changes for a different choice of weight distribution. We investigate the equivalent kernel for networks with (L)ReLU activations and general weight distributions in Section~\ref{sec:rotinvar} and~\ref{sec:universal}. The equivalent kernel can be composed and applied to deep networks. The kernel can also be used to choose good weights for initialization. These, as well as other implications for practical neural networks, are investigated in Section~\ref{sec:cor}.

\subsection{Kernels for Rotationally-Invariant Weights}
In this section we show that~\eqref{eq:chokernel} holds more generally than for the case where $f$ is Gaussian. Specifically,~\eqref{eq:chokernel} holds when $f$ is any rotationally invariant distribution. We do this by casting~\eqref{kernel} as the solution to an ODE, and then solving the ODE. We then extend this result using the same technique to the case where $\sigma$ is LReLU.

A rotationally-invariant PDF one with the property $f(\mathbf{w}) = f(R\mathbf{w}) = f(\Vert \mathbf{w} \Vert)$ for all $\mathbf{w}$ and orthogonal matrices $R$. Recall that the class of rotationally-invariant distributions~\cite{bryc1995rotation}, as a subclass of elliptically contoured distributions~\cite{johnson2013multivariate}, includes the Gaussian distribution, the multivariate t-distribution, the symmetric multivariate Laplace distribution, and symmetric multivariate stable distributions.
\label{sec:rotinvar}
\begin{prop}
\label{cor:kernel}
Suppose we have a one-hidden layer feedforward network with ReLU $\sigma$ and random weights $\mathbf{W}$ with uncorrelated and identically distributed rows with rotationally-invariant PDF $f:\mathbb{R}^m \to \mathbb{R}$ and $\mathbb{E}[W_i^2] < \infty$. The equivalent kernel of the network is~\eqref{eq:chokernel}.
\end{prop}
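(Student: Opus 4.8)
The plan is to exploit rotational invariance to collapse the $m$-dimensional integral in~\eqref{kernel} down to a two-dimensional one, and then to obtain a simple ODE in the angle $\theta_0$ whose solution is~\eqref{eq:chokernel}. First I would use the fact that for any orthogonal $R$ the substitution $\mathbf{w} \mapsto R^T\mathbf{w}$ leaves both $f$ and the Lebesgue measure unchanged while sending $\mathbf{w}\cdot\mathbf{x} \mapsto \mathbf{w}\cdot R\mathbf{x}$, so that $k(\mathbf{x},\mathbf{y}) = k(R\mathbf{x}, R\mathbf{y})$; hence $k$ depends on the inputs only through $\Vert \mathbf{x} \Vert$, $\Vert \mathbf{y} \Vert$ and $\theta_0$. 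Combining this with the positive homogeneity of ReLU, $\sigma(c z) = c\,\sigma(z)$ for $c>0$, lets me factor $k(\mathbf{x},\mathbf{y}) = \Vert \mathbf{x} \Vert \, \Vert \mathbf{y} \Vert \, g(\theta_0)$, where $g(\theta_0)$ is the kernel evaluated at the unit vectors $\hat{\mathbf{x}}, \hat{\mathbf{y}}$.

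Next I would choose coordinates with $\hat{\mathbf{x}} = (1,0,\dots,0)$ and $\hat{\mathbf{y}} = (\cos\theta_0, \sin\theta_0, 0,\dots,0)$, so that the integrand depends only on $w_1, w_2$. Integrating out the remaining $m-2$ coordinates leaves the rotationally-invariant $2$D marginal density $f_2$ of $(W_1,W_2)$, which depends on $w_1, w_2$ only through $w_1^2+w_2^2$. Passing to polar coordinates $(r,\phi)$ and again invoking homogeneity, the radial and angular parts separate as $g(\theta_0) = C \cdot I(\theta_0)$, with $C = \int_0^\infty r^3 f_2(r)\,dr$ and $I(\theta_0) = \int_{\theta_0-\pi/2}^{\pi/2} \cos\phi\,\cos(\phi-\theta_0)\,d\phi$. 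Evaluating $\mathbb{E}[W_i^2]$ in these same polar coordinates identifies $C = \mathbb{E}[W_i^2]/\pi$; the hypothesis $\mathbb{E}[W_i^2] < \infty$ is precisely what guarantees $C$ is finite, so that the kernel is well defined.

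The crux is then to evaluate $I(\theta_0)$. Rather than integrate directly, I would differentiate twice with respect to $\theta_0$. Applying Leibniz's rule, the boundary contribution from the moving lower limit vanishes at the first differentiation because $\cos(\phi-\theta_0)=0$ at $\phi=\theta_0-\pi/2$, and the second differentiation produces the inhomogeneous linear ODE $I''(\theta_0) + I(\theta_0) = \sin\theta_0$. This is the step to watch most carefully, since it requires justifying differentiation under the integral sign and tracking the endpoint term correctly. Solving the ODE, whose forcing is resonant and hence contributes a $\theta_0\cos\theta_0$ term, gives $I(\theta_0) = A\cos\theta_0 + B\sin\theta_0 - \tfrac{\theta_0}{2}\cos\theta_0$.

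Finally I would fix the constants from the initial data $I(0) = \int_{-\pi/2}^{\pi/2}\cos^2\phi\,d\phi = \pi/2$ and $I'(0) = 0$, yielding $A = \pi/2$ and $B = 1/2$, so that $I(\theta_0) = \tfrac12\big(\sin\theta_0 + (\pi-\theta_0)\cos\theta_0\big)$. Multiplying by $C = \mathbb{E}[W_i^2]/\pi$ and reinstating the norm prefactor reproduces~\eqref{eq:chokernel} exactly, completing the argument. I expect the main obstacle to be the rigorous justification of the differentiation-under-the-integral step and the bookkeeping of the Leibniz boundary term, since everything else is either a change of variables or a routine constant-coefficient ODE.
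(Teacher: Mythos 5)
Your proof is correct, and it reaches \eqref{eq:chokernel} by a genuinely cleaner mechanism than the paper's, even though both hinge on the same resonant ODE $k''+k \propto \sin\theta_0$. The paper differentiates the full $m$-dimensional integral \eqref{kernel} twice in $\theta_0$, which turns the Heaviside factors into delta functions; this forces it to work with \emph{distributional} derivatives (justified via a theorem on generalized functions), to prove separately (its Proposition~\ref{theorem}) that the forcing term collapses to $K\sin\theta_0$ with $K<\infty$ and that the distributional solution coincides with the classical one, and finally to determine $K$ indirectly from the boundary data $k(\pi)=k'(\pi)=0$ together with the directly computed value $k=\tfrac{1}{2}\Vert\mathbf{x}\Vert\Vert\mathbf{y}\Vert\mathbb{E}[W_i^2]$ at $\theta_0=0$. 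You avoid all of that: by marginalizing first to the rotationally invariant two-dimensional density $f_2$ and separating radial and angular parts in polar coordinates, the radial constant is identified \emph{directly} as $C=\mathbb{E}[W_i^2]/\pi$ (which simultaneously settles finiteness, the analogue of the paper's $K<\infty$ argument), and the remaining object $I(\theta_0)=\int_{\theta_0-\pi/2}^{\pi/2}\cos\phi\cos(\phi-\theta_0)\,d\phi$ is a smooth integral over a compact, moving interval, so your Leibniz computation is entirely classical --- the delta functions of the paper's derivation reappear harmlessly as your endpoint term $\sin\theta_0$. Your endpoint bookkeeping checks out ($\cos(\phi-\theta_0)$ vanishes at the lower limit on the first differentiation, and the second differentiation's boundary term is exactly the forcing), as do the constants $A=\pi/2$, $B=1/2$ and the identification $C=\mathbb{E}[W_i^2]/\pi$. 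Two remarks: first, the step you flag as the main obstacle is actually dispensable, since $\cos\phi\cos(\phi-\theta_0)=\tfrac12\big(\cos(2\phi-\theta_0)+\cos\theta_0\big)$ evaluates $I(\theta_0)=\tfrac12\big(\sin\theta_0+(\pi-\theta_0)\cos\theta_0\big)$ in one line, so you could bypass the ODE (and its justification) entirely; second, the one thing the paper's heavier route buys is that its ODE formulation (Proposition~\ref{lemma}) holds for \emph{arbitrary} weight PDFs, with rotational invariance entering only through the forcing term, whereas your reduction is tailored to the rotationally invariant case --- which is all this proposition requires, but is less reusable as a standalone lemma.
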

\begin{proof}
First, we require the following.
\begin{prop}
\label{lemma}
With the conditions in Proposition~\ref{cor:kernel} and inputs $\mathbf{x}, \mathbf{y} \in \mathbb{R}^m$ the equivalent kernel of the network is the solution to the Initial Value Problem (IVP)
\begin{equation}
\label{eq:ODE}
k''(\theta_0) + k(\theta_0)=F(\theta_0), \quad k'(\pi) = 0, \quad k(\pi) =0,
\end{equation}
where $\theta_0 \in (0, \pi)$ is the angle between the inputs $\mathbf{x}$ and $\mathbf{y}$. The derivatives are meant in the distributional sense; they are functionals applying to all test functions in $C_c^\infty(0, \pi)$. $F(\theta_0)$ is given by the $m-1$ dimensional integral
\begin{align*}
F(\theta_0)&= \int_{\mathbb{R}^{m-1}}f \Big( (s \sin\theta_0, -s\cos\theta_0, w_3, ..., w_m)^T \Big) \\
&\quad\Theta(s) s^3 \,ds\,dw_3\,dw_4...\,dw_m \Vert \mathbf{x} \Vert \Vert \mathbf{y} \Vert \sin\theta_0, \numberthis \label{eq:forcing_term}
\end{align*}
where $\Theta$ is the Heaviside step function.
\end{prop}
The proof is given in Appendix~\ref{app:lemma}. The main idea is to rotate $\mathbf{w}$ (following Cho \& Saul~\yrcite{cho2009kernel}) so that 
\begin{align*}
k(\mathbf{x}, \mathbf{y})&= \int_{\mathbb{R}^m}\Theta(w_1 ) \Theta(w_1 \cos\theta_0 + w_2 \sin\theta_0) w_1 \\
&\quad(w_1 \cos\theta_0 + w_2 \sin\theta_0)f(\mathbf{w}) \,d\mathbf{w} \Vert \mathbf{x} \Vert \Vert \mathbf{y} \Vert.
\end{align*}
Now differentiating twice with respect to $\theta_0$ yields the second order ODE~\eqref{eq:ODE}. 
\iffalse
The ODE in~\eqref{eq:ODE} has a physical interpretation as a second order circuit with $\theta_0$ replacing time, as shown in Figure~\ref{fig:circuit}. 
\vspace{-1em}
\begin{figure}[!htbp]
\centering
\newcommand{\size}{1.5}
\begin{circuitikz} \draw
(0,0) to[sinusoidal voltage source, label=$F(\theta_0)$]  (0,\size)
      to[cute inductor, label=1<\henry>] (\size, \size) 
      to node[right,pos=1.25]{\small $q_+(\theta_0)$} (\size, \size*3/4)
      to[C,label=1<\farad>] (\size,\size*1/4)
      to node[right,pos=-0.25]{\small $q_-(\theta_0)$} (\size, 0)
      -- (0,0);
\end{circuitikz}
\caption{Kernel ODE as an LC circuit containing a unit inductor, unit capacitor and voltage source $F(\theta_0)$. The kernel $k(\theta_0)$ is the charge across the capacitor $k(\theta_0)=q_+(\theta_0)-q_-(\theta_0)$.}
\label{fig:circuit}
\end{figure}
\fi
The usefulness of the ODE in its current form is limited, since the forcing term $F(\theta_0)$ as in~\eqref{eq:forcing_term} is difficult to interpret.
However, regardless of the underlying distribution on weights $ \mathbf{w} $, as long as the PDF $ f $ in~\eqref{eq:forcing_term} corresponds to any rotationally-invariant distribution, the integral enjoys a much simpler representation. 
\begin{prop}
\label{theorem}
With the conditions in Proposition~\ref{cor:kernel}, the forcing term $F(\theta_0)$ in the kernel ODE is given by $F(\theta_0) = K \sin\theta_0,$ where
\begin{align*}
K &= \int_{\mathbb{R}^{m-1}} \Theta(s) s^3 f \big( (s, 0, w_3, ..., w_m)^T \big) \\
&\quad \,ds\,dw_3,...\,dw_m \Vert \mathbf{x} \Vert \Vert \mathbf{y} \Vert < \infty,
\end{align*}
and the solution to the distributional ODE~\eqref{eq:ODE} is the solution to the corresponding classical ODE.
\end{prop}
The proof is given in Appendix~\ref{app:theorem}. 

Note that in the representation $F(\theta_0) = K \sin\theta_0$ of the forcing term, the underlying distribution appears only as a constant $K$. For all rotationally-invariant distributions, the forcing term in~\eqref{eq:ODE} results in an equivalent kernel with the \emph{same form}. We can combine Propositions~\ref{lemma} and~\ref{theorem} to find the equivalent kernel assuming rotationally-invariant weight distributions.

Due to the rotational invariance of $f$, $k(\pi)= \int_{\mathbb{R}^m} \Theta(w_1)w_1^2 f(R\mathbf{w}) \,d\mathbf{w} \Vert \mathbf{x} \Vert \Vert \mathbf{y} \Vert = \frac{\Vert \mathbf{x} \Vert \Vert \mathbf{y} \Vert \mathbb{E}[W_i^2]}{2}$. The solution to the ODE in Proposition~\ref{lemma} using the forcing term from Proposition~\ref{theorem} is $k(\theta_0)=c_1\cos\theta_0 + c_2\sin\theta_0-\frac{1}{2}K\theta_0\cos\theta_0.$ Using the conditions from the IVP and $k(\pi)$, the values of $c_1, c_2$ and $K$ give the required result.
\end{proof}

\iffalse
The Moore-Aronszajn theorem says that each reproducing kernel Hilbert space has associated with it exactly one kernel, and each kernel has associated with it exactly one reproducing kernel Hilbert space~\cite{aronszajn1950theory}. Our result does not contradict this; every kernel may have more than one corresponding feature map.
\fi

One can apply the same technique to the case of LReLU activations $\sigma(z) = \big( a+(1-a)\Theta(z) \big)z,$ where $a$ specifies the gradient of the activation for $z < 0$.  \iffalse A single layer LReLU network does not strictly have an equivalent kernel, since it is naturally implicit that a kernel must be positive semi-definite (PSD). Nevertheless, with LReLU activations and weights coming from a rotationally-invariant distribution, the integral in~\eqref{kernel} has an equivalent form.\fi 
\begin{prop}
\label{cor:lrelukernel}
\change{Consider the same situation as in Proposition~\ref{cor:kernel} with the exception that the activations are LReLU.} The integral~\eqref{kernel} is then given by
\begin{align*}
k(\mathbf{x}, \mathbf{y})&= \Big[ \frac{(1-a)^2}{2\pi} \big(\sin\theta_0 + (\pi-\theta_0)\cos\theta_0 \big)+ a\cos\theta_0 \Big] \\
&\quad\mathbf{E}[W_i^2] \Vert \mathbf{x} \Vert \Vert \mathbf{y} \Vert, \label{eq:kernel} \numberthis
\end{align*}
where $a \in [0,1)$ is the LReLU gradient parameter.
\end{prop}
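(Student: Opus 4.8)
The plan is to reduce the LReLU kernel to the already-established ReLU (Arc-Cosine) kernel via a linear decomposition of the activation. Writing $r(z) = \Theta(z)z$ for the ReLU, I observe that the LReLU activation splits as $\sigma(z) = \big(a+(1-a)\Theta(z)\big)z = a z + (1-a) r(z)$. Substituting $u = \mathbf{w}\cdot\mathbf{x}$ and $v = \mathbf{w}\cdot\mathbf{y}$ and expanding,
\[
\sigma(u)\sigma(v) = a^2\, uv + a(1-a)\big(u\, r(v) + r(u)\, v\big) + (1-a)^2\, r(u)r(v),
\]
so by linearity of the integral~\eqref{kernel} the kernel splits into three pieces, each of which I evaluate separately.

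First, the pure ReLU piece $\int_{\mathbb{R}^m} r(u) r(v) f\, d\mathbf{w}$ is exactly the Arc-Cosine kernel~\eqref{eq:chokernel} by Proposition~\ref{cor:kernel}, so it contributes $(1-a)^2$ times~\eqref{eq:chokernel}. Second, the bilinear piece is $\int_{\mathbb{R}^m} uv\, f\, d\mathbf{w} = \mathbb{E}[(\mathbf{W}\cdot\mathbf{x})(\mathbf{W}\cdot\mathbf{y})] = \sum_{i,j} x_i y_j\, \mathbb{E}[W_i W_j]$; rotational invariance of $f$ (in particular $f(\mathbf{w})=f(-\mathbf{w})$ together with invariance under coordinate permutations and sign flips) forces $\mathbb{E}[W_i W_j] = \mathbb{E}[W_i^2]\,\delta_{ij}$, so this piece equals $\mathbb{E}[W_i^2]\,\mathbf{x}\cdot\mathbf{y} = \mathbb{E}[W_i^2]\Vert\mathbf{x}\Vert\Vert\mathbf{y}\Vert\cos\theta_0$.

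The cross piece is the step I expect to require the most care. I would evaluate $\int_{\mathbb{R}^m} r(u)\, v\, f\, d\mathbf{w} = \mathbb{E}[r(u) v]$ by writing $v = r(v) - r(-v)$, so that $\mathbb{E}[r(u)v] = \mathbb{E}[r(u)r(v)] - \mathbb{E}[r(u)r(-v)]$. The first term is again~\eqref{eq:chokernel} at angle $\theta_0$; the second is the Arc-Cosine kernel between $\mathbf{x}$ and $-\mathbf{y}$, whose separating angle is $\pi-\theta_0$ while $\Vert-\mathbf{y}\Vert=\Vert\mathbf{y}\Vert$. Invoking Proposition~\ref{cor:kernel} once more and using $\sin(\pi-\theta_0)=\sin\theta_0$, $\cos(\pi-\theta_0)=-\cos\theta_0$, the two arc-cosine expressions combine to $\tfrac12\mathbb{E}[W_i^2]\Vert\mathbf{x}\Vert\Vert\mathbf{y}\Vert\cos\theta_0$. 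By the symmetry $\mathbf{x}\leftrightarrow\mathbf{y}$ the term $\mathbb{E}[u\, r(v)]$ is identical, so the full cross piece contributes $a(1-a)\,\mathbb{E}[W_i^2]\Vert\mathbf{x}\Vert\Vert\mathbf{y}\Vert\cos\theta_0$.

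Finally I collect the three contributions: the bilinear and cross pieces combine with coefficient $a^2 + a(1-a) = a$ on $\mathbb{E}[W_i^2]\Vert\mathbf{x}\Vert\Vert\mathbf{y}\Vert\cos\theta_0$, while the ReLU piece carries $(1-a)^2$ on the bracketed arc-cosine form, yielding exactly the claimed expression. The only genuinely delicate point is the cross-term evaluation; as an independent check I would also split $\mathbb{E}\big[(\Theta(u)+\Theta(v))uv\big]$ into its four sign-quadrants and pair them using the evenness of $f$, which again returns $\mathbb{E}[W_i^2]\,\mathbf{x}\cdot\mathbf{y}$ and confirms the computation.
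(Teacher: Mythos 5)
Your proof is correct, and it differs from the paper's at the one step you flagged as delicate. The decomposition itself is the same: expanding $\sigma(z)=az+(1-a)\Theta(z)z$ yields exactly the four pieces the paper calls $k_1,\dots,k_4$, and you handle the pure ReLU piece (by Proposition~\ref{cor:kernel}) and the bilinear piece (by uncorrelatedness of the coordinates, which as you note follows from sign-flip and permutation invariance --- both orthogonal maps covered by the paper's definition of rotational invariance, and also offered as the ``direct'' option in the paper's own treatment of $k_1$) just as the paper does. The genuine divergence is the cross term $\mathbb{E}[r(u)v]$: the paper stays inside its ODE framework, rotating coordinates, differentiating twice in $\theta_0$, and solving the resulting \emph{homogeneous} IVP $k''+k=0$ with $k(0)=a(1-a)\tfrac{\mathbb{E}[W_i^2]}{2}\Vert\mathbf{x}\Vert\Vert\mathbf{y}\Vert$, $k'(0)=0$; you instead bootstrap from the already-proven ReLU kernel via $v=r(v)-r(-v)$, legitimately applying Proposition~\ref{cor:kernel} to the pair $(\mathbf{x},-\mathbf{y})$ at angle $\pi-\theta_0$, and your cancellation $\big((\pi-\theta_0)\cos\theta_0\big)-\big(-\theta_0\cos\theta_0\big)=\pi\cos\theta_0$ correctly gives $\tfrac12\mathbb{E}[W_i^2]\Vert\mathbf{x}\Vert\Vert\mathbf{y}\Vert\cos\theta_0$, matching the paper's $k_2=k_3$. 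Your route buys modularity --- once the ReLU result is known, no further distributional-derivative or IVP machinery is needed --- and your closing sign-quadrant check, $\mathbb{E}[\Theta(u)uv]=\tfrac12\mathbb{E}[uv]$ by the antipodal symmetry $f(\mathbf{w})=f(-\mathbf{w})$, is in fact the shortest correct evaluation of the cross term and exposes that this piece requires only central symmetry of $f$, not full rotational invariance. The paper's route buys uniformity, dispatching every piece with the same rotate--differentiate--solve technique established in Propositions~\ref{lemma} and~\ref{theorem}. Both arguments collect to coefficient $a^2+a(1-a)=a$ on $\cos\theta_0$ and $(1-a)^2$ on the Arc-Cosine bracket, i.e., exactly \eqref{eq:kernel}.
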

\iffalse
The corresponding normalized kernel is
\begin{align*}
& \cos(\theta_1) = \\ &\frac{1}{1+a^2}\Big[ \frac{(1-a)^2}{\pi} \big(\sin\theta_0 + (\pi-\theta_0)\cos\theta_0 \big)+ 2a\cos\theta_0 \Big].
\end{align*}
While a one-hidden layer LReLU network's kernel cannot be PSD (with $a>0$), a multi-hidden layer network's kernel may be PSD, as verified later in Figure~\ref{fig:composed}. 
\fi
This is just a slightly more involved calculation than the ReLU case; we defer our proof to the supplementary material.
\subsection{Asymptotic Kernels}
\label{sec:universal}
\rtwo{In this section we approximate $k$ for large $m$ and more general weight PDFs. We invoke the CLT as $m \to \infty$, which requires a condition that we discuss briefly before presenting it formally. The dot product $\mathbf{w}{\cdot}\mathbf{x}$ can be seen as a linear combination of the weights, with the coefficients corresponding to the coordinates of $\mathbf{x}$. Roughly, such a linear combination will obey the CLT if many coefficients are non-zero. To let $m \to \infty$, we \emph{construct} a sequence of inputs $\{ \mathbf{x}^{(m)}\}_{m=2}^\infty$. This may appear unusual in the context of neural networks, since $m$ is fixed and finite in practice. The sequence is used only for asymptotic analysis.}

\rtwo{As an example if the dataset were CelebA~\cite{liu2015faceattributes} with $116412$ inputs, one would have $\mathbf{x}^{(116412)}$. To generate an artificial sequence, one could down-sample the image to be of size $116411$, $116410$, and so on. At each point in the sequence, one could normalize the point so that its $\ell^2$ norm is $\Vert \mathbf{x}^{(116412)} \Vert$. One could similarly up-sample the image.}

\rtwo{Intuitively, if the up-sampled image does not just insert zeros, as $m$ increases the we expect the ratio $\frac{ |x^{(m)}_i|}{\Vert \mathbf{x}^{(m)} \Vert}$ to decrease because the denominator stays fixed and the numerator gets smaller. In our proof the application of CLT requires $\max_{i=1}^m \frac{ |x^{(m)}_i|}{\Vert \mathbf{x}^{(m)} \Vert}$ to decrease faster than $m^{1/4}$. Hypothesis~\ref{hyp} states this condition precisely.}\\

\begin{hyp}
\label{hyp}
For $\mathbf{x}^{(m)}, \mathbf{y}^{(m)} \in \mathbb{R}^m$, define sequences of inputs $\{ \mathbf{x}^{(m)}\}_{m=2}^\infty$ and  $\{\mathbf{y}^{(m)} \}_{m=2}^\infty$ with fixed $\Vert \mathbf{x}^{(m)} \Vert{=}\Vert \mathbf{x} \Vert$, $\Vert \mathbf{y}^{(m)} \Vert{=}\Vert \mathbf{y} \Vert$, and $\theta_0{=}\cos^{-1} \frac{\mathbf{x}^{(m)} \cdot \mathbf{y}^{(m)}}{\Vert \mathbf{x} \Vert \Vert \mathbf{y} \Vert}$ for all $m$. 

Letting $x^{(m)}_i$ be the $i^{th}$ coordinate of $\mathbf{x}^{(m)}$, assume that $\lim\limits_{m\to \infty} m^{(1/4)} \max_{i=1}^m \frac{ |x^{(m)}_i|}{\Vert \mathbf{x} \Vert}$ and $\lim\limits_{m\to \infty} m^{(1/4)} \max_{i=1}^m \frac{ |y^{(m)}_i|}{\Vert \mathbf{y} \Vert}$ are both $0$.
\end{hyp}

Figures~\ref{fig:asymptote1} and~\ref{fig:asymptote2}  empirically investigate Hypothesis~\ref{hyp} for two datasets, suggesting it makes reasonable assumptions on high dimensional data such as images and audio.

\newcommand{\sizescal}{0.195}
\newcommand{\sizespac}{0.44}
\begin{figure}[!htbp]
\centering
\begin{subfigure}[t]{\sizespac \linewidth}
\includegraphics[scale=0.11]{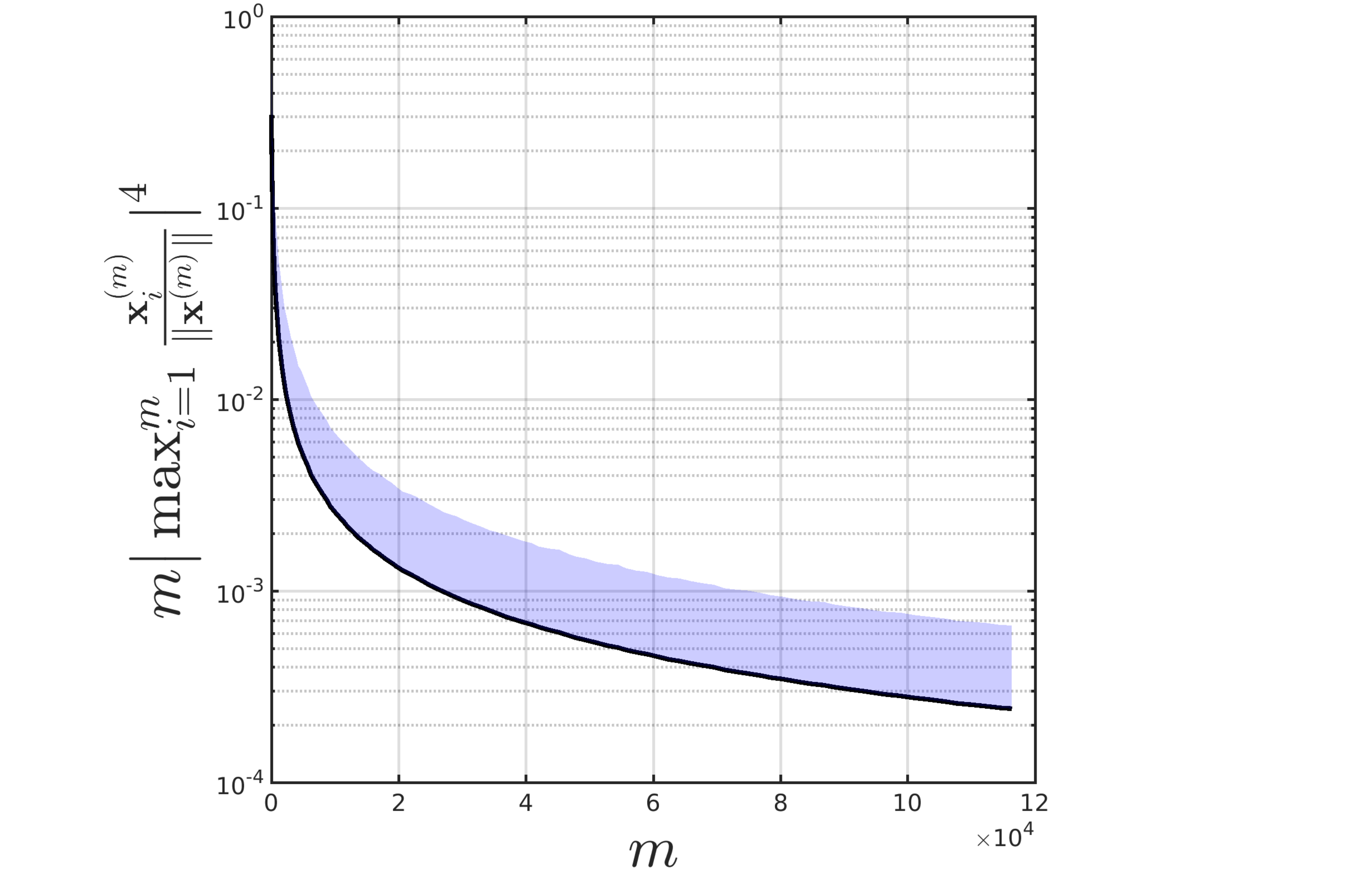}
\end{subfigure}
\begin{subfigure}[t]{\sizespac \linewidth}
\includegraphics[scale=0.11,clip, trim=0cm 0cm 9cm 0cm]{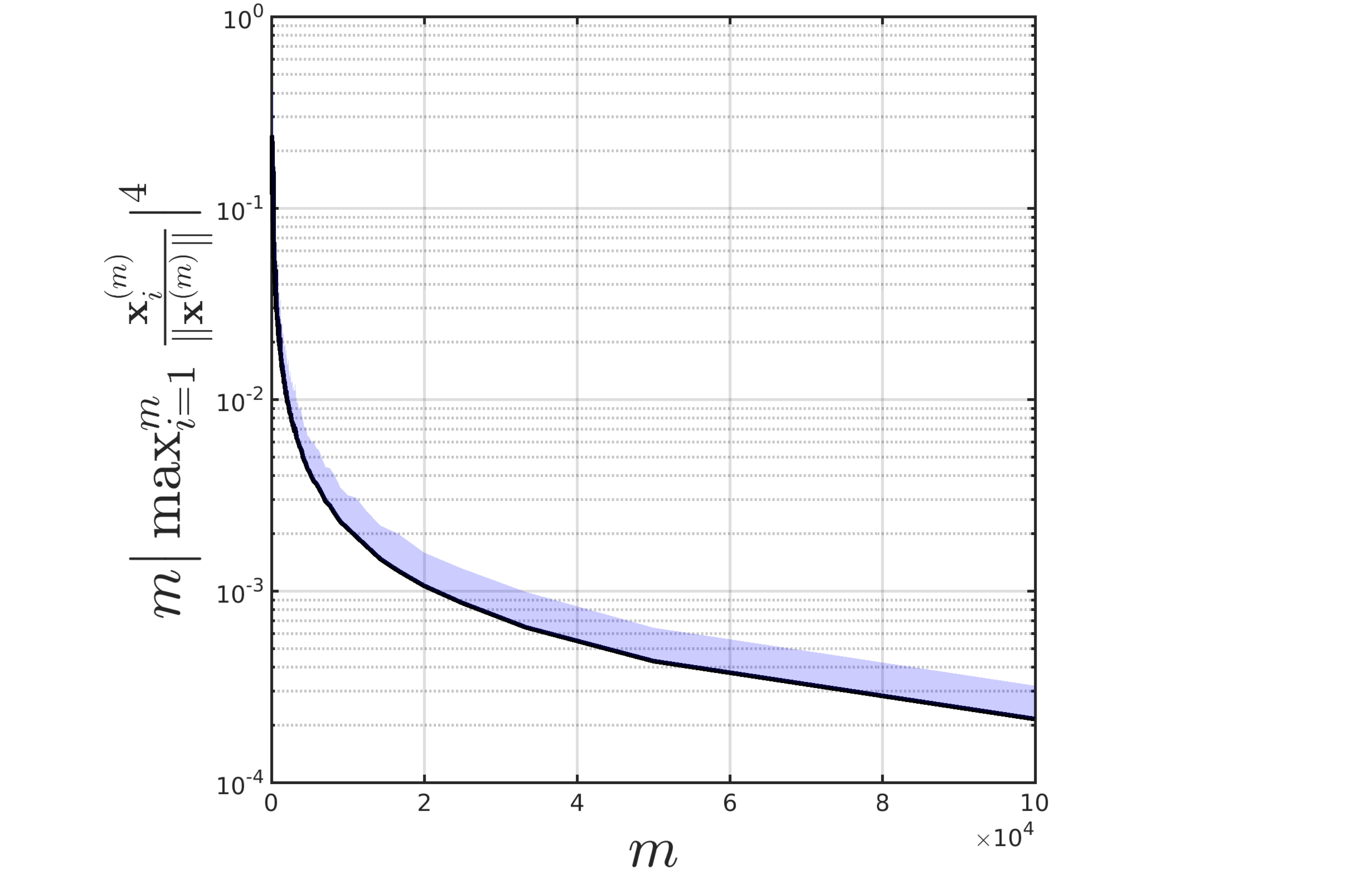}
\end{subfigure}
\caption{The solid line is an average over $1000$ randomly sampled datapoints. The shaded region represents $1$ standard deviation in the worst-case direction. Data is preprocessed so that each dimension is in the range $[0, 255]$. (Left) Aligned and cropped CelebA dataset~\cite{liu2015faceattributes}, with true dimensionality $m=116412$. The images are compressed using Bicubic Interpolation. (Right) CHiME3\_embedded\_et05\_real live speech data from The 4th CHiME Speech Separation and Recognition Challenge~\cite{Vincent2017csl,Barker2017csl}. Each clip is trimmed to a length of $6.25$ seconds and the true sample rate is $16000$ Hz, so the true dimensionality is $m=100000$. Compression is achieved through subsampling by integer factors.} 
\label{fig:asymptote1}
\end{figure}

\begin{theorem}
\label{thm:asymptotic}
Consider an infinitely wide FC layer with almost everywhere continuous activation functions $\sigma$. Suppose the random weights $\mathbf{W}$ come from an IID distribution with PDF $f_m$ such that $\mathbf{E}[W_i]=0$ and $\mathbf{E}|W_i^3|<\infty$. Suppose that the conditions in Hypothesis~\ref{hyp} are satisfied. Then 
$$ \sigma(\mathbf{W}^{(m)} \cdot \mathbf{x}^{(m)})\sigma( \mathbf{W}^{(m)} \cdot \mathbf{y}^{(m)}) \xrightarrow{D} \sigma(Z_1) \sigma(Z_2), $$
where $\xrightarrow{D}$ denotes convergence in distribution and $\mathbf{Z}=(Z_1, Z_2)^T$ is a Gaussian random vector with covariance matrix $ \mathbb{E}[W_i^2]
\begin{bmatrix}
    \Vert \mathbf{x} \Vert^2       & \Vert \mathbf{x} \Vert\Vert \mathbf{y} \Vert\cos\theta_0 \\
    \Vert \mathbf{x} \Vert\Vert \mathbf{y} \Vert\cos\theta_0       & \Vert \mathbf{y} \Vert^2
\end{bmatrix}$ and $\mathbf{0}$ mean. Every $\mathbf{Z}^{(m)}=(\mathbf{W}^{(m)} \cdot \mathbf{x}^{(m)}, \mathbf{W}^{(m)} \cdot \mathbf{y}^{(m)})^T$ has the same mean and covariance matrix as $\mathbf{Z}.$
\end{theorem}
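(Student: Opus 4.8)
The plan is to split the claim into two independent pieces: a bivariate Central Limit Theorem for the pre-activation vector $\mathbf{Z}^{(m)}=(\mathbf{W}^{(m)}\cdot\mathbf{x}^{(m)},\mathbf{W}^{(m)}\cdot\mathbf{y}^{(m)})^T$, followed by an application of the continuous mapping theorem to the product map $g(z_1,z_2)=\sigma(z_1)\sigma(z_2)$. That is, I would first prove $\mathbf{Z}^{(m)}\xrightarrow{D}\mathbf{Z}$ and then argue that $g$ is continuous $\mathbf{Z}$-almost surely, so that $g(\mathbf{Z}^{(m)})\xrightarrow{D}g(\mathbf{Z})$, which is exactly the assertion. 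The moment-matching sentence at the end is the easy part: since the $W_i$ are independent with mean $0$, for every fixed $m$ we have $\mathbb{E}[\mathbf{W}^{(m)}\cdot\mathbf{x}^{(m)}]=0$ and $\mathrm{Cov}(\mathbf{W}^{(m)}\cdot\mathbf{x}^{(m)},\mathbf{W}^{(m)}\cdot\mathbf{y}^{(m)})=\mathbb{E}[W_i^2]\,\mathbf{x}^{(m)}\cdot\mathbf{y}^{(m)}=\mathbb{E}[W_i^2]\|\mathbf{x}\|\|\mathbf{y}\|\cos\theta_0$, which is the off-diagonal entry of the stated matrix; the diagonal entries follow identically. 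This uses only Hypothesis~\ref{hyp}'s invariants $\|\mathbf{x}\|,\|\mathbf{y}\|,\theta_0$, and it holds for every $m$, not merely in the limit.

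For the bivariate CLT I would use the Cramér--Wold device to reduce to a one-dimensional statement: it suffices to show, for each $(a,b)\in\mathbb{R}^2$, that the scalar $a(\mathbf{W}^{(m)}\cdot\mathbf{x}^{(m)})+b(\mathbf{W}^{(m)}\cdot\mathbf{y}^{(m)})=\sum_{i=1}^m W_i\,c_i^{(m)}$ with $c_i^{(m)}=a x_i^{(m)}+b y_i^{(m)}$ converges in distribution to $aZ_1+bZ_2$. This is a row-wise-independent triangular array of mean-zero terms, so I would invoke the Lyapunov CLT. The limiting variance $\|\mathbf{c}^{(m)}\|^2\mathbb{E}[W_i^2]=(a^2\|\mathbf{x}\|^2+2ab\|\mathbf{x}\|\|\mathbf{y}\|\cos\theta_0+b^2\|\mathbf{y}\|^2)\mathbb{E}[W_i^2]$ is constant in $m$ and matches $\mathrm{Var}(aZ_1+bZ_2)$; I would treat the degenerate directions where it vanishes separately, as there the limit is the point mass at $0$. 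The Lyapunov ratio is bounded by $\frac{\mathbb{E}|W_i^3|}{(\mathbb{E}[W_i^2])^{3/2}}\cdot\frac{\max_i|c_i^{(m)}|}{\|\mathbf{c}^{(m)}\|}$ after using $\sum_i|c_i^{(m)}|^3\le\max_i|c_i^{(m)}|\sum_i(c_i^{(m)})^2$, and since $\max_i|c_i^{(m)}|\le|a|\max_i|x_i^{(m)}|+|b|\max_i|y_i^{(m)}|$, Hypothesis~\ref{hyp} forces this to vanish. This is precisely where the $m^{1/4}$ decay rate is consumed, providing enough slack to absorb any growth of the (possibly $m$-dependent) third-moment factor.

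Finally, for the continuous mapping step I need the discontinuity set $D_g$ of $g$ to satisfy $\mathbb{P}(\mathbf{Z}\in D_g)=0$. Because $\sigma$ is continuous off a Lebesgue-null set $D_\sigma\subset\mathbb{R}$, we have $D_g\subseteq(D_\sigma\times\mathbb{R})\cup(\mathbb{R}\times D_\sigma)$, which is Lebesgue-null in $\mathbb{R}^2$; and since $\theta_0\in(0,\pi)$ the covariance matrix has determinant $(\mathbb{E}[W_i^2])^2\|\mathbf{x}\|^2\|\mathbf{y}\|^2\sin^2\theta_0>0$, so $\mathbf{Z}$ is a nondegenerate Gaussian with a density and assigns zero mass to $D_g$. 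I expect the main obstacle to lie here and in the Lyapunov verification rather than in the algebra: one must (i) confirm that almost-everywhere continuity of $\sigma$ genuinely transfers to $g$ and that nondegeneracy of $\mathbf{Z}$ is guaranteed, and (ii) make the Lyapunov bound uniform while allowing $f_m$ to vary with $m$, which is exactly the regime the quantitative $m^{1/4}$ condition in Hypothesis~\ref{hyp} is designed to handle.
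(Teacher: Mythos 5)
Your proposal is correct, but it takes a genuinely different route from the paper. The paper's proof (Appendix C) first builds an explicit orthonormal pair $\mathbf{R_1}, \mathbf{R_2}$ with $\mathbf{x}^{(m)} = \Vert \mathbf{x} \Vert \mathbf{R_1}$ and $\mathbf{y}^{(m)} = \Vert \mathbf{y} \Vert(\mathbf{R_1}\cos\theta_0 + \mathbf{R_2}\sin\theta_0)$, passes to the \emph{decorrelated} pair $(U_1^{(m)}, U_2^{(m)})$, and then applies the quantitative multivariate Berry--Esseen theorem, bounding the error by a constant times $M_2^{-3}M_3^2\, m\bigl(4\max_k R_{1k}^4 + 4\max_k R_{2k}^4\bigr)$; Hypothesis~\ref{hyp}'s $m^{1/4}$ rate is exactly what makes this bound vanish. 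You instead use Cram\'er--Wold plus the Lyapunov CLT for triangular arrays applied directly to $\sum_i W_i c_i^{(m)}$ with $c_i^{(m)} = a x_i^{(m)} + b y_i^{(m)}$. Your route buys three things: it avoids the Gram--Schmidt construction entirely (the paper's $R_{2k} = \frac{1}{\sin\theta_0}(y_k/\Vert\mathbf{y}\Vert - x_k\cos\theta_0/\Vert\mathbf{x}\Vert)$ forces a division by $\sin\theta_0$ and a separate treatment of $\theta_0 \in \{0,\pi\}$, which you handle uniformly via degenerate directions where $\Vert \mathbf{c}^{(m)}\Vert = 0$, an $m$-independent condition since the Gram data is fixed); it shows the theorem actually holds under a hypothesis weaker than stated, since your Lyapunov ratio only needs $\max_i |c_i^{(m)}|/\Vert\mathbf{c}^{(m)}\Vert \to 0$ with no rate, whereas the Berry--Esseen bound genuinely consumes the $m^{1/4}$ decay (your reading of the slack as room for $m$-dependent moments is sensible, though the paper, like you in the covariance computation, implicitly keeps $\mathbb{E}[W_i^2]$ and $\mathbb{E}|W_i^3|$ fixed across $m$, as the limiting covariance would otherwise be ill-defined); and your continuous-mapping step is more careful than the paper's, which invokes the CMT citing only almost-everywhere continuity of $\sigma$ without verifying $\mathbb{P}(\mathbf{Z} \in D_g) = 0$ --- your inclusion $D_g \subseteq (D_\sigma \times \mathbb{R}) \cup (\mathbb{R} \times D_\sigma)$ with the nondegeneracy determinant $(\mathbb{E}[W_i^2])^2\Vert\mathbf{x}\Vert^2\Vert\mathbf{y}\Vert^2\sin^2\theta_0$ patches this. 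What the paper's approach buys in exchange is a quantitative convergence rate from Berry--Esseen, which the qualitative Lyapunov route does not provide. One small residual gap on your side: at $\theta_0 \in \{0,\pi\}$ (which the theorem does not exclude, and the paper treats by setting $\mathbf{R_2} = \mathbf{0}$) the limit $\mathbf{Z}$ is degenerate and has no density in $\mathbb{R}^2$, so your null-set argument must be rerun in one dimension along the support line, using that $\{t : t\Vert\mathbf{x}\Vert \in D_\sigma\} \cup \{t : \pm t\Vert\mathbf{y}\Vert \in D_\sigma\}$ is Lebesgue-null for $\Vert\mathbf{x}\Vert, \Vert\mathbf{y}\Vert \neq 0$; this is easy but should be said.
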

Convergence in distribution is a weak form of convergence, so we cannot expect in general that all kernels should converge asymptotically. For some special cases however, this is indeed possible to show. We first present the ReLU case.
\begin{cor}
\label{cor:relucase}
Let $m$, $\mathbf{W}$, $f_m$, $\mathbf{E}[W_i]$  and $\mathbf{E}|W_i^3|$ be as defined in Theorem~\ref{thm:asymptotic}.  Define the corresponding kernel to be $k^{(m)}_f \big( \mathbf{x}^{(m)},\mathbf{y}^{(m)} \big).$
Consider a second infinitely wide FC layer with $m$ inputs. Suppose the random weights come from a spherical Gaussian with $\mathbb{E}[W_i]=0$ and finite variance $\mathbb{E}[W_i^2]$ with PDF $g_m$. Define the corresponding kernel to be $k^{(m)}_g \big( \mathbf{x}^{(m)},\mathbf{y}^{(m)} \big)$.
Suppose that the conditions in Hypothesis~\ref{hyp} are satisfied and the activation functions are $\sigma(z)= \Theta(z) z$. Then for all $s \geq 2$,
\begin{align*}
&\quad \lim_{m \to \infty} k^{(m)}_f \big( \mathbf{x}^{(m)},\mathbf{y}^{(m)} \big) \\
&= k^{(s)}_g \big( \mathbf{x}^{(s)},\mathbf{y}^{(s)} \big)=\mathbb{E} \big[ \sigma(Z_1) \sigma(Z_2) \big],
 \end{align*}
where $\mathbf{Z}$ is as in Theorem~\ref{thm:asymptotic}. Explicitly, $k^{(m)}_f$ converges to~\eqref{eq:chokernel}.
\end{cor}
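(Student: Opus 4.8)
The plan is to treat each kernel as an expectation and to upgrade the convergence in distribution supplied by Theorem~\ref{thm:asymptotic} to convergence of expectations. Writing $U_m = \mathbf{W}^{(m)} \cdot \mathbf{x}^{(m)}$ and $V_m = \mathbf{W}^{(m)} \cdot \mathbf{y}^{(m)}$, the kernel is exactly $k^{(m)}_f = \mathbb{E}[\sigma(U_m)\sigma(V_m)]$. Theorem~\ref{thm:asymptotic} gives $\sigma(U_m)\sigma(V_m) \xrightarrow{D} \sigma(Z_1)\sigma(Z_2)$, but convergence in distribution controls only bounded continuous test functions, whereas the ReLU product $\sigma(U_m)\sigma(V_m) = \Theta(U_m)\Theta(V_m)U_m V_m$ is unbounded. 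The crux is therefore to establish uniform integrability of the sequence $\{\sigma(U_m)\sigma(V_m)\}_m$; convergence in distribution together with uniform integrability then yields convergence in mean, i.e. $k^{(m)}_f \to \mathbb{E}[\sigma(Z_1)\sigma(Z_2)]$.

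To get uniform integrability I would show the sequence is bounded in $L^p$ for some $p>1$, and this is precisely where the hypotheses $\mathbb{E}|W_i^3|<\infty$ and Hypothesis~\ref{hyp} enter. Since $|\sigma(U_m)\sigma(V_m)| \leq |U_m V_m|$, it suffices to bound $\mathbb{E}|U_m|^3$ and $\mathbb{E}|V_m|^3$ uniformly in $m$ and then apply Cauchy--Schwarz to obtain a uniform bound on $\mathbb{E}|U_m V_m|^{3/2}$, the exponent $3/2>1$ giving the desired $L^{1+\epsilon}$ control. To bound $\mathbb{E}|U_m|^3$, note that $U_m = \sum_{i=1}^m W_i x_i^{(m)}$ is a sum of independent mean-zero terms, so a Rosenthal-type (or Marcinkiewicz--Zygmund) moment inequality gives $\mathbb{E}|U_m|^3 \lesssim (\sum_i \mathbb{E}[W_i^2](x_i^{(m)})^2)^{3/2} + \sum_i \mathbb{E}|W_i|^3 |x_i^{(m)}|^3$. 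The first summand equals $(\mathbb{E}[W_i^2]\Vert\mathbf{x}\Vert^2)^{3/2}$, which is fixed across $m$ by the normalization of Hypothesis~\ref{hyp}; for the second I would use $\sum_i |x_i^{(m)}|^3 \leq (\max_i |x_i^{(m)}|)\,\Vert\mathbf{x}\Vert^2$, and Hypothesis~\ref{hyp} forces $\max_i |x_i^{(m)}| \to 0$, so this term vanishes. Hence $\sup_m \mathbb{E}|U_m|^3 < \infty$, and symmetrically for $V_m$, which establishes the uniform integrability.

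For the Gaussian comparison, observe that a spherical Gaussian is rotationally invariant, so for every finite $s \geq 2$ the vector $(\mathbf{W}^{(s)} \cdot \mathbf{x}^{(s)}, \mathbf{W}^{(s)} \cdot \mathbf{y}^{(s)})$ is \emph{exactly} Gaussian with mean $\mathbf{0}$ and the covariance matrix of Theorem~\ref{thm:asymptotic} (the entries $\mathbb{E}[W_i^2]$, the norms, and $\theta_0$ are all held fixed along the sequence). It therefore has the same law as $\mathbf{Z}$, giving $k^{(s)}_g = \mathbb{E}[\sigma(Z_1)\sigma(Z_2)]$ identically in $s$ with no limit required — which is exactly why the statement quantifies over all $s \geq 2$. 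Combining the two halves yields $\lim_{m\to\infty} k^{(m)}_f = k^{(s)}_g = \mathbb{E}[\sigma(Z_1)\sigma(Z_2)]$, and since the Gaussian weights are rotationally invariant, Proposition~\ref{cor:kernel} identifies this common value with the Arc-Cosine Kernel~\eqref{eq:chokernel}.

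The main obstacle is the uniform integrability step: convergence in distribution is genuinely too weak to pass to expectations for the unbounded ReLU product, and the whole argument hinges on converting the moment hypotheses into an $L^{1+\epsilon}$ bound through a sum-of-independent-variables moment inequality. I expect the bookkeeping of the Rosenthal constants and the verification that $\sum_i |x_i^{(m)}|^3 \to 0$ to be the only delicate points; everything else is inherited from the already-established Theorem~\ref{thm:asymptotic} and Proposition~\ref{cor:kernel}.
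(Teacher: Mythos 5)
Your proposal is correct and follows the paper's overall strategy---recast the kernel as $k^{(m)}_f=\mathbb{E}\big[\sigma(Z_1^{(m)})\sigma(Z_2^{(m)})\big]$, invoke Theorem~\ref{thm:asymptotic}, and upgrade convergence in distribution to convergence of means via uniform integrability (the paper cites Theorem 25.12 of Billingsley for exactly this step)---but your uniform integrability argument proceeds by a genuinely different route. The paper attacks the tail integral directly: it notes the integrand vanishes off the positive quadrant, raises the Heaviside factors to the second power and applies H\"older (in effect Cauchy--Schwarz) to split the truncated expectation into two factors of the form $\mathbb{E}\big[z_1^2\,\Theta(z_1z_2-\alpha)\Theta(z_1)\Theta(z_2)\big]$, each of which is sent to $0$ as $\alpha\to\infty$ by the Monotone Convergence Theorem, using only the fixed second moments $\mathbb{E}\big[Z_1^{(m)2}\big]=\mathbb{E}[W_i^2]\Vert\mathbf{x}\Vert^2$. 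You instead establish a uniform-in-$m$ $L^{3/2}$ bound, $\mathbb{E}\big|U_mV_m\big|^{3/2}\le\sqrt{\mathbb{E}|U_m|^3\,\mathbb{E}|V_m|^3}$, controlling $\mathbb{E}|U_m|^3$ through a Rosenthal/Marcinkiewicz--Zygmund inequality for the sum of independent mean-zero terms $W_ix_i^{(m)}$, thereby reusing the third-moment hypothesis $\mathbb{E}|W_i|^3<\infty$. Both routes are valid, and yours buys something the paper's writeup arguably glosses: uniform integrability requires the truncated expectations to vanish \emph{uniformly} over $m$, and the paper's monotone-convergence step is carried out for the density $h$ of $\mathbf{Z}^{(m)}$ at fixed $m$, so the decay rate could in principle depend on $m$; your $\sup_m\mathbb{E}|U_mV_m|^{3/2}<\infty$ makes the uniformity explicit and closes that loophole. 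One small simplification you missed: the Rosenthal remainder does not actually need $\max_i|x^{(m)}_i|\to 0$, since $\sum_i|x^{(m)}_i|^3\le\big(\max_i|x^{(m)}_i|\big)\Vert\mathbf{x}\Vert^2\le\Vert\mathbf{x}\Vert^3$ is automatically bounded once the norms are fixed---Hypothesis~\ref{hyp} is needed only for the Berry--Esseen step inside Theorem~\ref{thm:asymptotic}. Your handling of the Gaussian side (the vector $\big(\mathbf{W}^{(s)}\cdot\mathbf{x}^{(s)},\mathbf{W}^{(s)}\cdot\mathbf{y}^{(s)}\big)$ is exactly Gaussian with the stated covariance for every finite $s$, so $k^{(s)}_g=\mathbb{E}[\sigma(Z_1)\sigma(Z_2)]$ with no limit, identified with \eqref{eq:chokernel} via Proposition~\ref{cor:kernel}) matches what the paper leaves implicit.
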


\newcommand{\sizescale}{0.195}
\newcommand{\sizespace}{0.245}
\begin{figure*}[t!]
\centering
\begin{subfigure}[t]{\sizespace \linewidth}
\includegraphics[scale= \sizescale ]{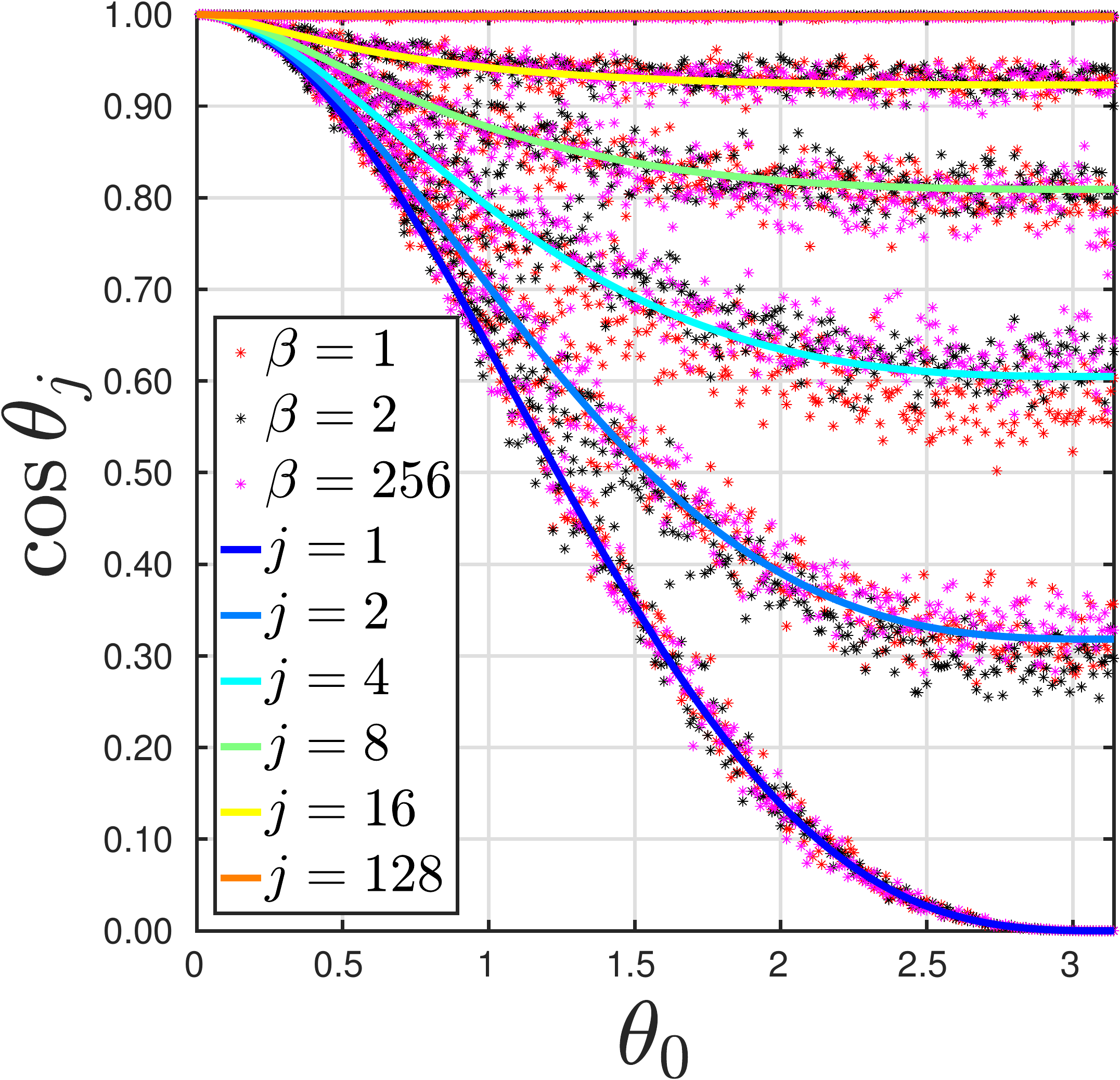} \caption{}
\end{subfigure}
\begin{subfigure}[t]{\sizespace \linewidth}
\includegraphics[scale= \sizescale ]{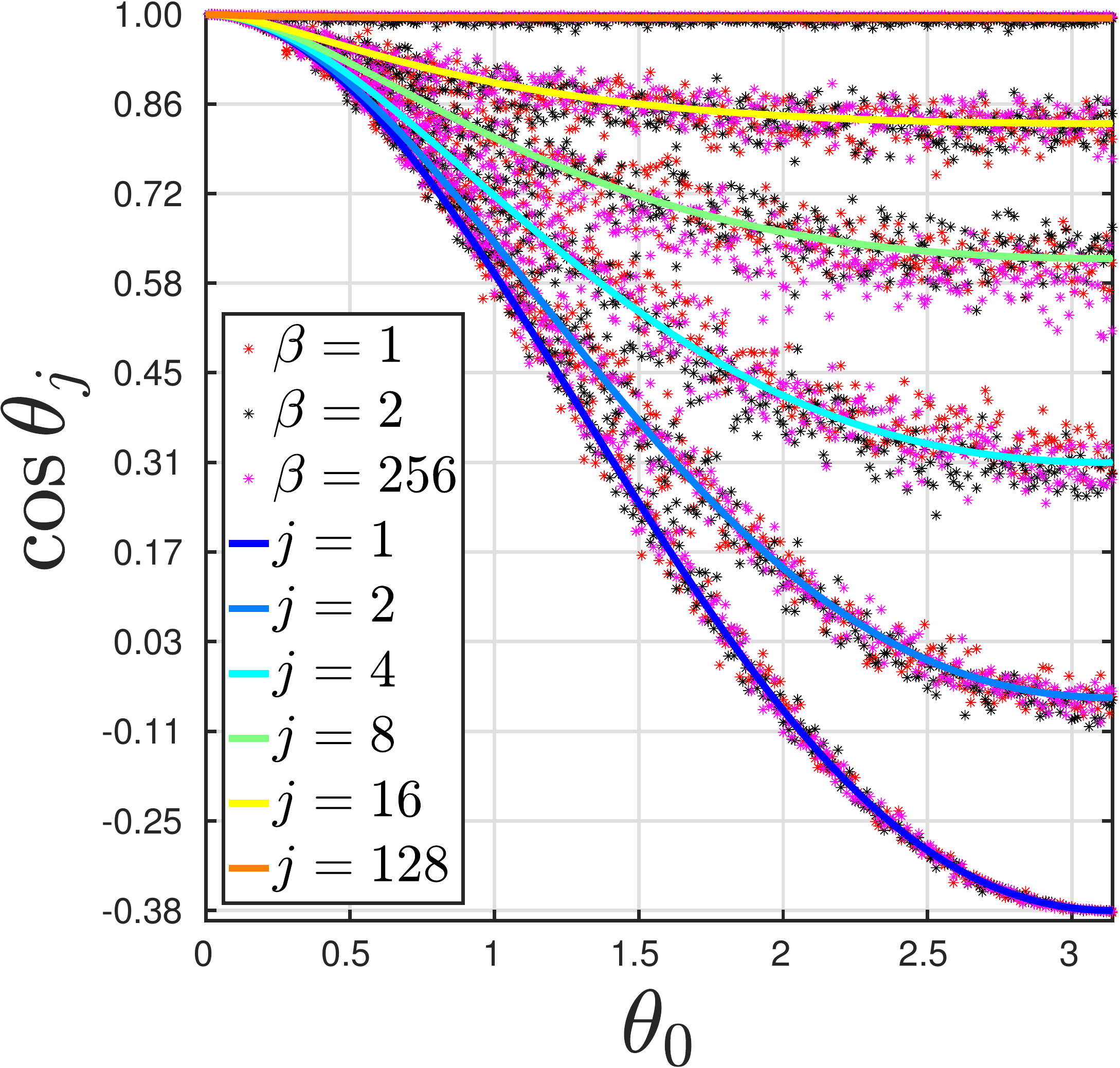} \caption{}
\end{subfigure}
\begin{subfigure}[t]{\sizespace \linewidth}
\includegraphics[scale= \sizescale ]{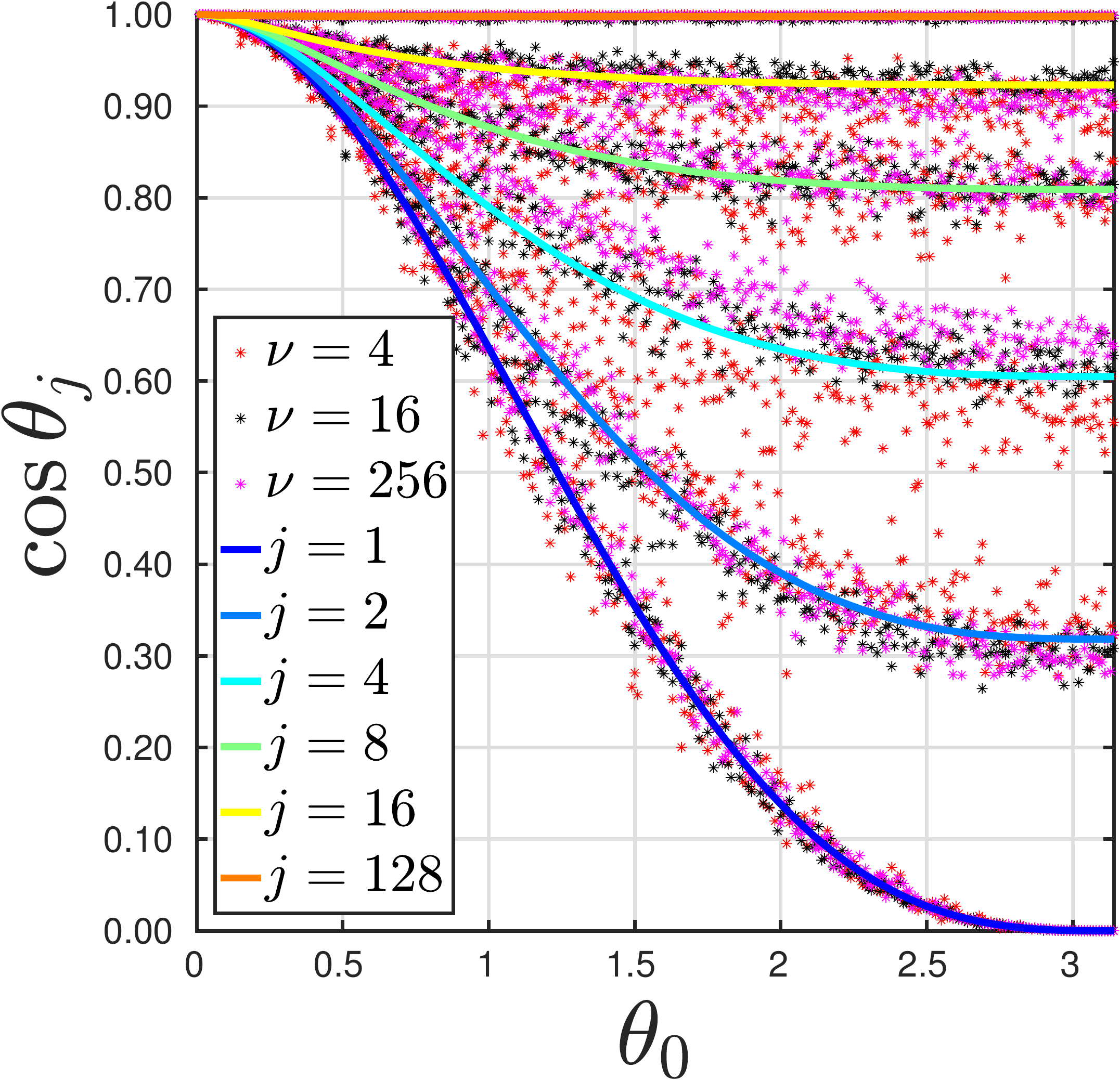} \caption{}
\end{subfigure}
\begin{subfigure}[t]{\sizespace \linewidth}
\includegraphics[scale= \sizescale ]{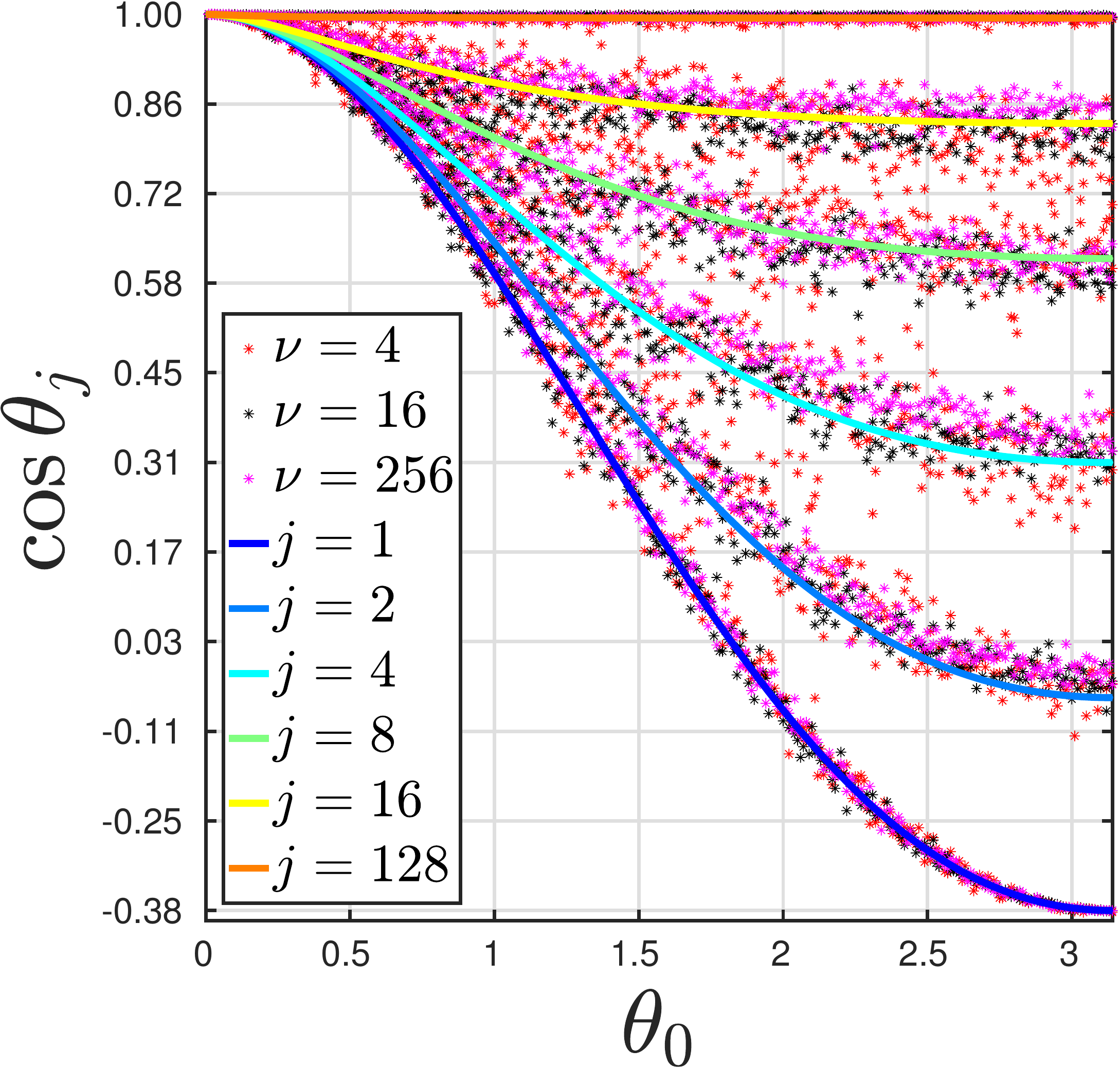} \caption{}
\end{subfigure}
\caption{Theoretical normalized kernel for networks of increasing depth. Empirical samples from a network with between $1$ and $128$ hidden layers, $1000$ hidden neurons in each layer, $m=1000$ and weights coming from different symmetric distributions. The sampling process for each $\theta_0$ is as described in Figure~\ref{fig:arccosker}. The variance is chosen according to~\eqref{eq:lreluinit}. (a) ReLU Activations,~\eqref{eq:gengauss} distribution. (b) LReLU Activations with $a=0.2$,~\eqref{eq:gengauss} distribution. (c) ReLU Activations, t-distribution. (d) LReLU Activations with $a=0.2$, t-distribution.}
\label{fig:composed}
\end{figure*}

The proof is given in Appendix~\ref{app:reluasym}. This implies that the Arc-Cosine Kernel is well approximated by ReLU layers with weights from a wide class of distributions. Similar results hold for other $\sigma$ including the LReLU and ELU~\cite{clevert2015fast}, as shown in the supplementary material.

\section{Empirical Verification of Results}
\label{sec:empir}
%We now empirically verify the theoretical results presented in Section~\ref{sec:theory}.
We empirically verify our results using two families of weight distributions. First, consider the $m$-dimensional t-distribution 
\begin{align*}
f(\mathbf{w}) &= \frac{\Gamma[(\nu + m)/2]}{\Gamma(\nu/2)\nu^{m/2}\pi^{m/2} \sqrt{ |\text{det}(\Sigma)|} } \\
&\quad \Big[ 1 + \frac{1}{\nu}(\mathbf{w}^T\Sigma^{-1}\mathbf{w}) \Big]^{-(\nu+m)/2},
\end{align*}
with degrees of freedom $\nu$ and identity shape matrix $\Sigma= I$. The multivariate t-distribution approaches the multivariate Gaussian as $\nu \to \infty$. Random variables drawn from the multivariate t-distribution are uncorrelated but not independent. This distribution is rotationally-invariant and satisfies the conditions in Propositions~\eqref{cor:kernel} and~\eqref{cor:lrelukernel}. 

Second, consider the multivariate distribution
\begin{equation}
\label{eq:gengauss}
f(\mathbf{w}) = \prod_{i=1}^m \frac{\beta}{2\alpha \Gamma(1/\beta)} e^{-|w_i/\alpha|^\beta},
\end{equation}
which is not rotationally-invariant (except when $\beta = 2,$ which coincides with a Gaussian distribution) but whose random variables are IID and satisfy the conditions in Theorem~\ref{thm:asymptotic}. As $\beta \to \infty$ this distribution converges pointwise to the uniform distribution on $[-\alpha, \alpha]$.

In Figure~\ref{fig:composed}, we empirically verify Propositions~\ref{cor:kernel} and~\ref{cor:lrelukernel}. In the one hidden layer case, the samples follow the blue curve $j=1$, regardless of the specific multivariate t weight distribution which varies with $\nu$. We also observe that the universality of the equivalent kernel appears to hold for the distribution~\eqref{eq:gengauss} regardless of the value of $\beta$, as predicted by theory. We discuss the relevance of the curves $j \neq 1$ in Section~\ref{sec:cor}.

\section{Implications for Practical Networks}
\label{sec:cor}
\subsection{Composed Kernels in Deep Networks}
A recent advancement in understanding the difficulty in training deep neural networks is the identification of the shattered gradients problem~\cite{pmlr-v70-balduzzi17b}. Without skip connections, the gradients of deep networks approach white noise as they are backpropagated through the network, making them difficult to train.

A simple observation that complements this view is obtained through repeated composition of the normalized kernel. As $m \to \infty$, the angle between two inputs in the $j^{th}$ layer of a LReLU network random weights with $\mathbb{E}[W]=0$ and $\mathbb{E}|W^3| < \infty$ approaches $\cos\theta_j = \frac{1}{1+a^2} \Big( \frac{(1-a)^2}{\pi} \big( \sin\theta_{j-1} + (\pi-\theta_{j-1}) \cos\theta_{j-1} \big) + 2a\cos\theta_0 \Big)$.

A result similar to the following is hinted at by Lee et al.~\yrcite{lee2017gp}, citing Schoenholz et al.~\yrcite{schoenholz2016deep}. Their analysis, which considers biases in addition to weights~\cite{poole2016exponential}, yields insights on the trainability of random neural networks that our analysis cannot. However, their argument does not appear to provide a complete formal proof for the case when the activation functions are unbounded, e.g., ReLU. The degeneracy of the composed kernel with more general activation functions is also proved by Daniely~\yrcite{daniely2016toward}, with the assumption that the weights are Gaussian distributed.
\begin{cor}
\label{cor:fixed_point}
\rtwo{The normalized kernel corresponding to LReLU activations converges to a fixed point at $\theta^* = 0$.}
\end{cor}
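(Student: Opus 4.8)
The plan is to treat the normalized LReLU kernel as a one-dimensional self-map on the angle and show that $0$ is a globally attracting fixed point of its iteration. Concretely, I would write the recursion from the normalized kernel as $\cos\theta_j = h(\theta_{j-1})$, where
$$h(\theta) = \frac{1}{1+a^2}\Big[ \frac{(1-a)^2}{\pi}\big(\sin\theta + (\pi-\theta)\cos\theta\big) + 2a\cos\theta \Big],$$
so that one layer acts as $\theta_{j-1} \mapsto \theta_j = \arccos h(\theta_{j-1})$. First I would check that $\theta^\ast = 0$ is indeed a fixed point: substituting $\theta = 0$ gives $h(0) = \frac{1}{1+a^2}\big[(1-a)^2 + 2a\big] = 1$, hence $\arccos h(0) = 0$.

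The crux is to show that each layer \emph{strictly decreases} the angle on $(0,\pi)$. The key observation is the algebraic simplification
$$h(\theta) - \cos\theta = \frac{(1-a)^2}{\pi(1+a^2)}\big(\sin\theta - \theta\cos\theta\big),$$
which follows after collecting the $\cos\theta$ terms via $2a - (1+a^2) = -(1-a)^2$. It therefore suffices to prove $\sin\theta - \theta\cos\theta > 0$ on $(0,\pi)$; this is immediate since $p(\theta) := \sin\theta - \theta\cos\theta$ satisfies $p(0)=0$ and $p'(\theta) = \theta\sin\theta > 0$ there. Consequently $h(\theta) > \cos\theta$, and since $\arccos$ is strictly decreasing, $\theta_j < \theta_{j-1}$ whenever $\theta_{j-1}\in(0,\pi)$. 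I would also verify that the iteration stays in range, i.e.\ $h(\theta)\le 1$, by checking that $1 - h(\theta)$ is nondecreasing from $0$: its derivative is $\sin\theta\big(1 - \tfrac{(1-a)^2}{\pi(1+a^2)}\theta\big)\ge 0$, using $(1-a)^2 \le 1+a^2$ for $a\in[0,1)$. This keeps every $\theta_j$ in $[0,\pi)$.

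To finish, I would invoke monotone convergence: for any initial $\theta_0 \in (0,\pi)$ the sequence $\{\theta_j\}$ is strictly decreasing and bounded below by $0$, hence converges to some $L \ge 0$. By continuity of $\arccos\circ\, h$ the limit is a fixed point, $\cos L = h(L)$, so $p(L) = \sin L - L\cos L = 0$; since $p$ vanishes on $[0,\pi)$ only at $0$, we get $L = 0 = \theta^\ast$ as claimed. (The boundary case $\theta_0 = \pi$ is handled by noting $h(\pi) = \frac{(1-a)^2}{1+a^2} - 1 > -1$, so $\theta_1 < \pi$ and the interior argument applies.) I expect the main obstacle to be purely the algebraic reduction that makes the single inequality $\sin\theta > \theta\cos\theta$ do double duty — it simultaneously yields monotonicity of the iterates and uniqueness of the fixed point in $[0,\pi)$ — together with the bookkeeping needed to confirm the map is well defined (stays in $[-1,1]$) for every $a\in[0,1)$.
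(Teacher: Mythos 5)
Your proof is correct, but it takes a genuinely different route from the paper's. The paper works in $z=\cos\theta$ space: it defines $T(z)=\frac{1}{1+a^2}\big(\frac{(1-a)^2}{\pi}(\sqrt{1-z^2}+(\pi-\cos^{-1}z)z)+2az\big)$, computes $|T'(z)|=\big|1-\frac{(1-a)^2}{1+a^2}\frac{\cos^{-1}z}{\pi}\big|$, asserts this is bounded above by $1$ on $[-1,1]$, concludes $T$ is a contraction, and invokes Banach's fixed point theorem for existence and uniqueness, finally checking that $z^*=1$ (i.e.\ $\theta^*=0$) is the fixed point. You instead stay in angle space and run a monotone-convergence argument: the algebraic identity $h(\theta)-\cos\theta=\frac{(1-a)^2}{\pi(1+a^2)}\,p(\theta)$ with $p(\theta)=\sin\theta-\theta\cos\theta$, $p'(\theta)=\theta\sin\theta>0$ on $(0,\pi)$, makes the angle sequence strictly decreasing and bounded below, hence convergent, and continuity plus the fact that $p$ vanishes on $[0,\pi)$ only at $0$ identifies the limit; your checks that $h(\theta)\le 1$ and $h(\theta)>-1$ correctly ensure the iteration is well defined. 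The comparison is instructive: your route is arguably \emph{more} rigorous on exactly the point the paper glosses over, since $\cos^{-1}(1)=0$ gives $T'(1)=1$, so the supremum of $|T'|$ on $[-1,1]$ equals $1$ and is attained at the fixed point itself; Banach's theorem requires a Lipschitz constant strictly less than one, so the paper's appeal to it is informal as written (it can be repaired, e.g.\ by a separate argument near $z=1$, but ``bounded above by $1$'' does not suffice). Your monotone argument sidesteps this marginal-stability issue entirely and yields global convergence from every $\theta_0\in(0,\pi]$, at the cost of providing no convergence rate --- which is unavoidable here, since $T'(1)=1$ means convergence is in fact sub-geometric. Two minor bookkeeping notes: the paper's in-text recursion contains $2a\cos\theta_0$ where you (correctly, consistently with Proposition~\ref{cor:lrelukernel}) use $2a\cos\theta_{j-1}$, and your uniqueness step via $p(L)=0$ replaces the paper's uniqueness-by-contraction, so the two proofs also differ in how they rule out other fixed points.
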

\begin{proof}
Let $z=\cos\theta_{j-1}$ and define $$T(z){=}\frac{1}{1+a^2}\Big(\frac{(1-a)^2}{\pi} \big( \sqrt{1-z^2} + (\pi - \cos^{-1}z)z \big)+2az \Big).$$ The magnitude of the derivative of $T$ is $\Big|1-\big(\frac{1-a}{1+a} \big)^2\frac{\cos^{-1}z}{\pi}\Big|$ which is bounded above by $1$ on $[-1, 1]$. Therefore, $T$ is a contraction mapping. By Banach's fixed point theorem there exists a \textit{unique} fixed point $z^*=\cos\theta^*$. Set $\theta^*=0$ to verify that $\theta^*=0$ is a solution, and $\theta^*$ is unique.
\end{proof}
\vspace{-0.5em}
Corollary~\ref{cor:fixed_point} implies that for this deep network, the angle between any two signals at a deep layer approaches $0$. No matter what the input is, the kernel ``sees" the same thing after accounting for the scaling induced by the norm of the input. Hence, it becomes increasingly difficult to train deeper networks, as much of the information is lost and the outputs will depend merely on the norm of the inputs; the signals decorrelate as they propagate through the layers. 

At first this may seem counter-intuitive. An appeal to intuition can be made by considering the corresponding linear network with deterministic and equal weight matrices in each layer, which amounts to the celebrated power iteration method. In this case, the repeated application of a matrix transformation $A$ to a vector $v$ converges to the dominant eigenvector (i.e. the eigenvector corresponding to the largest eigenvalue) of $A$.

Figure~\ref{fig:composed} shows that the theoretical normalized kernel for networks of increasing depth closely follows empirical samples from randomly initialized neural networks.

In addition to convergence of direction, by also requiring that $\Vert \mathbf{x} \Vert= \Vert \mathbf{y}\Vert$ it can be shown that after accounting for scaling, the magnitude of the signals converge as the signals propagate through the network. This is analogous to having the dominant eigenvalue equal to $1$ in the power iteration method comparison.
%\begingroup\abovedisplayskip=3pt \belowdisplayskip=4pt \abovedisplayshortskip=3pt \belowdisplayshortskip=4pt
%\allowdisplaybreaks
\begin{cor}
The quantity $\mathbb{E} \Big[ \big( \sigma^{(j)}(\mathbf{x}) - \sigma^{(j)}(\mathbf{y}) \big)^2 \Big]/\mathbb{E}[\sigma^{(j)}(\mathbf{x})^2]$ in a $j$-layer random (L)ReLU network of infinite width with random uncorrelated and identically distributed rotationally-invariant weights with $\Vert \mathbf{x} \Vert{=}\Vert \mathbf{y} \Vert$ approaches $0$ as $j \to \infty$.
\end{cor}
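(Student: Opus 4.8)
The plan is to rewrite the quantity entirely in terms of the equivalent kernel at layer $j$ and then reduce it to the cosine of the inter-signal angle, whose convergence is already controlled by Corollary~\ref{cor:fixed_point}. Writing $k^{(j)}$ for the layer-$j$ equivalent kernel, the defining relation $k^{(j)}(\mathbf{x},\mathbf{y})=\mathbb{E}\big[\sigma^{(j)}(\mathbf{x})\sigma^{(j)}(\mathbf{y})\big]$ lets me expand the numerator as
\begin{equation*}
\mathbb{E}\Big[\big(\sigma^{(j)}(\mathbf{x})-\sigma^{(j)}(\mathbf{y})\big)^2\Big]=k^{(j)}(\mathbf{x},\mathbf{x})-2k^{(j)}(\mathbf{x},\mathbf{y})+k^{(j)}(\mathbf{y},\mathbf{y}),
\end{equation*}
while the denominator is simply $k^{(j)}(\mathbf{x},\mathbf{x})$.

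The first key step is to exploit the hypothesis $\Vert\mathbf{x}\Vert=\Vert\mathbf{y}\Vert$ to show by induction on $j$ that $k^{(j)}(\mathbf{x},\mathbf{x})=k^{(j)}(\mathbf{y},\mathbf{y})$ for every $j$. Evaluating the (L)ReLU kernel of Proposition~\ref{cor:lrelukernel} at coincident inputs gives $k(\mathbf{x},\mathbf{x})=\frac{1+a^2}{2}\mathbb{E}[W_i^2]\Vert\mathbf{x}\Vert^2$, so the squared signal norm at a layer depends only on the squared signal norm entering that layer. Since the two input norms agree at the input layer, the induction propagates the equality upward, yielding $k^{(j)}(\mathbf{x},\mathbf{x})=k^{(j)}(\mathbf{y},\mathbf{y})$ throughout.

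With the two diagonal terms equal, say both equal to $N_j$, the expression collapses to $\big(2N_j-2k^{(j)}(\mathbf{x},\mathbf{y})\big)/N_j=2\big(1-k^{(j)}(\mathbf{x},\mathbf{y})/N_j\big)$, and because $N_j=\sqrt{k^{(j)}(\mathbf{x},\mathbf{x})}\sqrt{k^{(j)}(\mathbf{y},\mathbf{y})}$ the ratio appearing here is exactly the normalized kernel $\cos\theta_j$. Hence the quantity equals $2(1-\cos\theta_j)$. Corollary~\ref{cor:fixed_point} states that the composed normalized kernel converges to the fixed point $\theta^{*}=0$, so $\cos\theta_j\to1$ and the quantity tends to $0$ as $j\to\infty$, which is the claim.

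The only genuinely delicate point is the equal-norm invariance: I must verify that the layer-to-layer map really acts on the norm alone, so that the two branches stay synchronized and the off-diagonal normalization is clean. Once that is in hand the reduction to $2(1-\cos\theta_j)$ is purely algebraic, and the limit follows immediately from the already-established contraction and fixed-point result, so no further analytic work is required.
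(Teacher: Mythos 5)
Your proposal is correct and follows essentially the same route as the paper's proof: expand the numerator as $k_j(\mathbf{x},\mathbf{x})-2k_j(\mathbf{x},\mathbf{y})+k_j(\mathbf{y},\mathbf{y})$, reduce the ratio to $2-2\cos\theta_j$, and invoke Corollary~\ref{cor:fixed_point}. The only difference is that you make explicit (via induction using $k(\mathbf{x},\mathbf{x})=\frac{1+a^2}{2}\mathbb{E}[W_i^2]\Vert\mathbf{x}\Vert^2$) the equality of the diagonal kernel terms under $\Vert\mathbf{x}\Vert=\Vert\mathbf{y}\Vert$, a step the paper leaves implicit.
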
 
\begin{proof}
Denote the output of one neuron in the $j^{th}$ layer of a network $\sigma(\bm{W^{(1)}} \cdot \bm{\sigma}(...\bm{\sigma}(W^{(j})\mathbf{x}))$ by $\sigma^{(j)}(\mathbf{x})$ and let $k_j$ be the kernel for the $j$-layer network. Then
\begin{align*}
\mathbb{E} &\Big[ \big( \sigma^{(j)}(\mathbf{x}) - \sigma^{(j)}(\mathbf{y}) \big)^2 \Big]/\mathbb{E}[\sigma^{(j)}(\mathbf{x})^2]\\
&= \big(k_j(\mathbf{x}, \mathbf{x})- 2 k_j(\mathbf{x}, \mathbf{y}) +  k_j(\mathbf{y}, \mathbf{y}) \big)/k_j(\mathbf{x}, \mathbf{x}), \\
&= 2 - 2\cos\theta_j
\end{align*}
which approaches $0$ as $j \to \infty$.
\end{proof}
\vspace{-0.5em}
%\endgroup
Contrary to the shattered gradients analysis, which applies to gradient based optimizers, our analysis relates to any optimizers that initialize weights from some distribution satisfying conditions in Proposition~\ref{cor:lrelukernel} or Corollary~\ref{cor:relucase}. Since information is lost during signal propagation, the network's output shares little information with the input. An optimizer that tries to relate inputs, outputs and weights through a suitable cost function will be ``blind" to relationships between inputs and outputs.

Our results can be used to argue against the utility of controversial Extreme Learning Machines (ELM)~\cite{huang2004extreme}, which randomly initialize hidden layers from symmetric distributions and only learn the weights in the final layer. A single layer ELM can be replaced by kernel ridge regression using the equivalent kernel. Furthermore, a Multi-Layer ELM~\cite{tang2016extreme} with (L)ReLU activations utilizes a pathological kernel as shown in Figure~\ref{fig:composed}. It should be noted that ELM bears resemblance to early works~\cite{schmidt1992feedforward, pao1994learning}.

\newcommand{\histscale}{0.18}
\newcommand{\histheight}{3.2}
\begin{figure}[!t]
\centering
\includegraphics[height=\histheight cm]{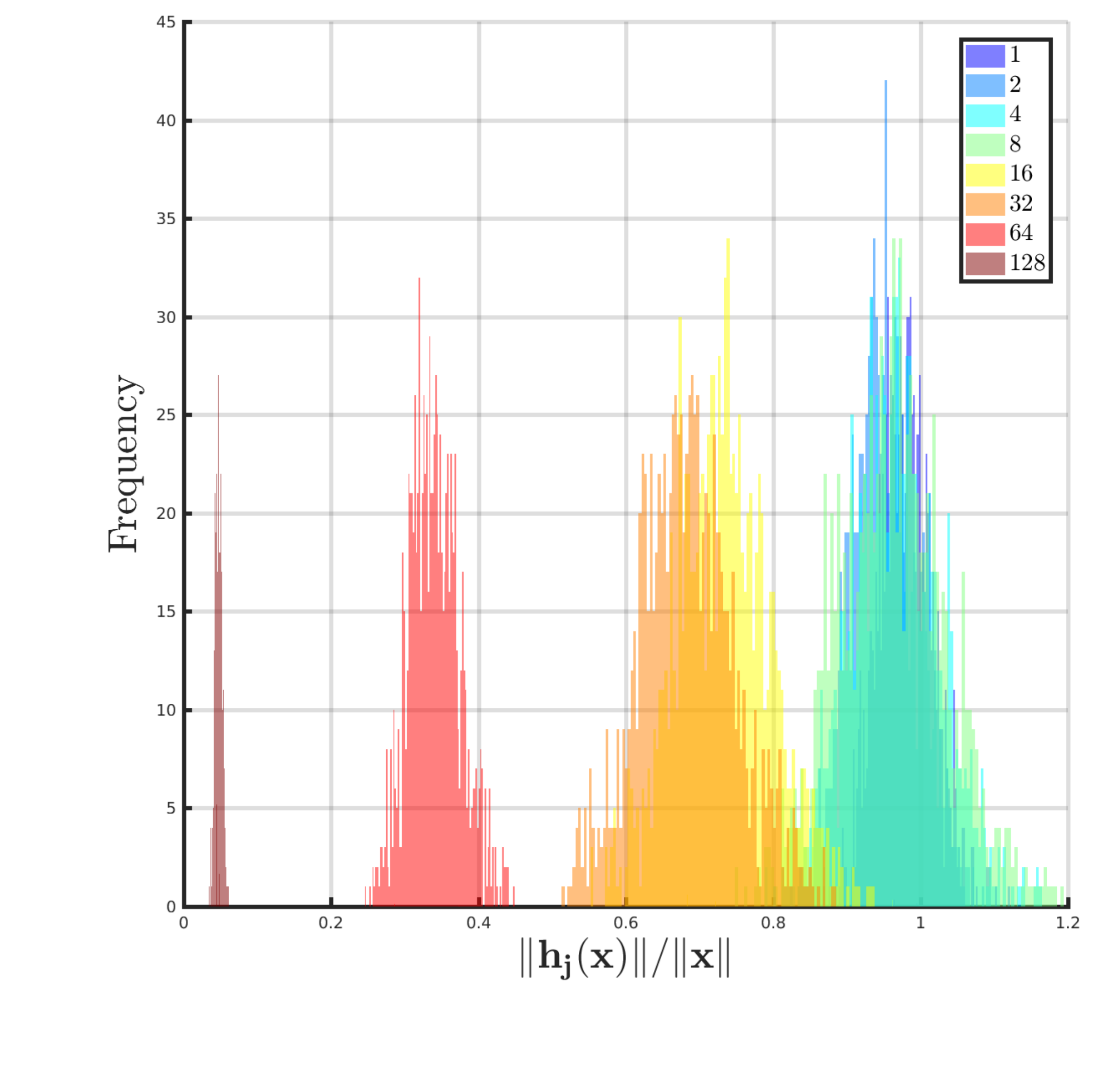}
\includegraphics[height=\histheight cm]{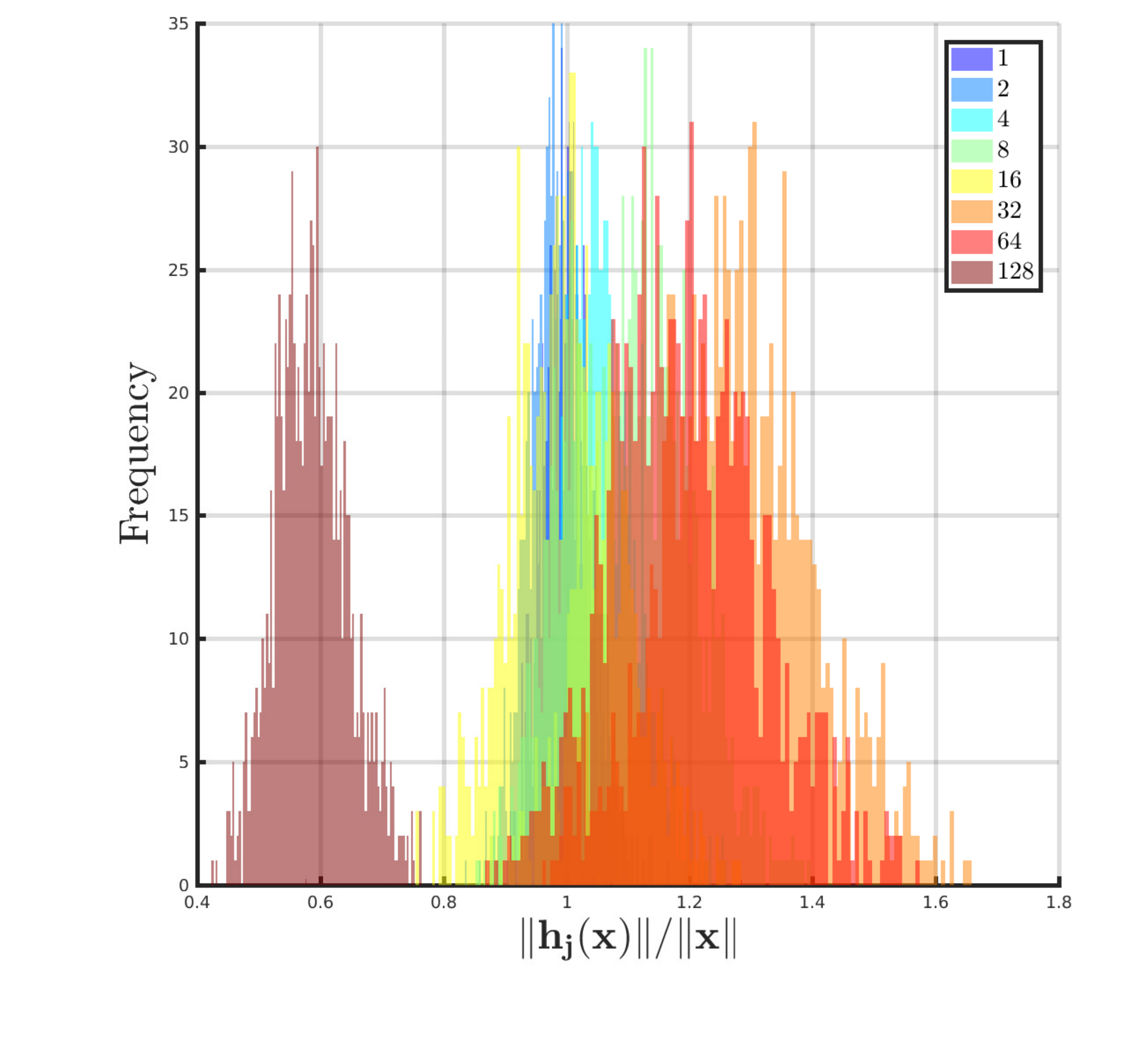}
\caption{Histograms showing the ratio of the norm of signals in layer $j$ to the norm of the input signals. Each histogram contains $1000$ data points randomly sampled from a unit Gaussian distribution. The network tested has $1000$ inputs, $1000$ neurons in each layer, and LReLU activations with $a=0.2$. The legend indicates the number of layers in the network. The weights are randomly initialized from a Gaussian distribution. (Left) Weights initialized according to the method of He et al.~\yrcite{he2015delving}. (Right) Weights initialized according to~\eqref{eq:lreluinit}.} \label{fig:norms} \end{figure}
\subsection{Initialization}
\iffalse
\begin{cor}
\label{cor:normsReLU}
Consider a hidden layer containing LReLU activations with gradient parameter $a \in (0,1)$. Suppose the weights are uncorrelated and identically distributed with a rotationally-invariant PDF. To approximately preserve the $\ell_2$ norm from the input to the output, we have
\begin{equation}
\label{eq:lreluinit}
\sqrt{\mathbb{E}[W^2]}=\sqrt{\frac{2}{\Big( 1+a^2 \Big) n}},
\end{equation}
where $n$ is the number of neurons in the hidden layer.
\end{cor}
\begin{proof}
The $\ell_2$-norm of the signal in the hidden layer for an input $\mathbf{x}$ is approximately $\Vert \mathbf{h}(\mathbf{x}) \Vert = \sqrt{k(\mathbf{x}, \mathbf{x})n},$ as evident in definition~\eqref{kernel}. Letting $\theta_0=0$ in~\eqref{eq:kernel}, we have
$\Vert \mathbf{h}(\mathbf{x}) \Vert = \Vert \mathbf{x} \Vert \sqrt{ \frac{n\mathbb{E}[W_i^2](1+a^2)}{2} }.$
Setting $\Vert \mathbf{h}(\mathbf{x}) \Vert = \Vert \mathbf{x} \Vert$ gives the desired result.
\end{proof}
\vspace{-0.5em}
\fi
\rtwo{Suppose we wish to approximately preserve the $\ell^2$ norm from the input to hidden layer. By comparing~\eqref{eq:sum} and~\eqref{kernel}, we approximately have $\Vert \mathbf{h}(\mathbf{x}) \Vert \approx \sqrt{k(\mathbf{x}, \mathbf{x})n}$. Letting $\theta_0=0$ in~\eqref{eq:kernel}, we have
$\Vert \mathbf{h}(\mathbf{x}) \Vert = \Vert \mathbf{x} \Vert \sqrt{ \frac{n\mathbb{E}[W_i^2](1+a^2)}{2} }.$ Setting $\Vert \mathbf{h}(\mathbf{x}) \Vert = \Vert \mathbf{x} \Vert$,}
\begin{equation}
\label{eq:lreluinit}
\sqrt{\mathbb{E}[W_i^2]}=\sqrt{\frac{2}{\Big( 1+a^2 \Big) n}}.
\end{equation}
This applies whenever the conditions in Proposition~\ref{cor:lrelukernel} or Corollary~\ref{cor:lrelucase} are satisfied. This agrees with the well-known case when the elements of $\mathbf{W}$ are IID~\cite{he2015delving} and $a=0$. \iffalse However, there is no need to restrict $\mathbf{W}$ to contain IID elements. Even if the weights are IID, the approximation of equivalent kernel of a ReLU network by the Arc-Cosine kernel holds universally. When the weights come from a Laplace distribution $f(w)=\frac{1}{2b}e^{-|x|/b},$ choosing
$b=\sqrt{\frac{1}{n(1+a^2)}}$
preserves the signal norms.
When the weights come from a uniform distribution on $[-a, a],$ choosing
$a=\sqrt{\frac{6}{n(1+a^2)}}$
preserves the signal norms.\fi For small values of $a$, ~\eqref{eq:lreluinit} is well approximated by the known result~\cite{he2015delving}. For larger values of $a$, this approximation breaks down, as shown in Figure~\ref{fig:norms}.

An alternative approach to weight initialization is the data-driven approach~\cite{mishkin2015all}, which can be applied to more complicated network structures such as convolutional and max-pooling layers commonly used in practice. As parameter distributions change during training, batch normalization inserts layers with learnable scaling and centering parameters at the cost of increased computation and complexity~\cite{ioffe2015batch}.

\section{Conclusion}
We have considered universal properties of MLPs with weights coming from a large class of distributions. We have theoretically and empirically shown that the equivalent kernel for networks with an infinite number of hidden ReLU neurons and all rotationally-invariant weight distributions is the Arc-Cosine Kernel. The CLT can be applied to approximate the kernel for high dimensional input data. When the activations are LReLUs, the equivalent kernel has a similar form. The kernel converges to a fixed point, showing that information is lost as signals propagate through the network.

One avenue for future work is to study the equivalent kernel for different activation functions, noting that some activations such as the ELU may not be expressible in a closed form (we do show in the supplementary material however, that the ELU does have an asymptotically universal kernel).

Since wide networks with centered weight distributions have approximately the same equivalent kernel, powerful trained deep and wide MLPs with (L)ReLU activations should have asymmetric, non-zero mean, non-IID parameter distributions. \rone{Future work may consider analyzing the equivalent kernels of trained networks and more complicated architectures. We should not expect that $k(\mathbf{x}, \mathbf{y})$ may be expressed neatly as $k(\theta_0)$ in these cases. This work is a crucial first step in identifying invariant properties in neural networks and sets a foundation from which we hope to expand in future.}

\appendix 
\section{Proof of Proposition~\ref{lemma}}
\label{app:lemma}
\begingroup\abovedisplayskip=3pt \belowdisplayskip=4pt \abovedisplayshortskip=3pt \belowdisplayshortskip=4pt
\begin{proof}
The kernel with weight PDF $f(\bm{\omega})$ and ReLU $\sigma$ is
$$k(\mathbf{x}, \mathbf{y}) = \int_{\mathbb{R}^m}\Theta(\bm{\omega}\cdot\mathbf{x}) \Theta(\bm{\omega}\cdot\mathbf{y}) (\bm{\omega} \cdot \mathbf{x}) (\bm{\omega} \cdot \mathbf{y}) f(\bm{\omega}) \,d\bm{\omega}.$$ 
Let $\theta_0$ be the angle between $\mathbf{x}$ and $\mathbf{y}$. Define $\mathbf{u} = (\Vert \mathbf{x} \Vert, 0, ..., 0)^T$ and $\mathbf{v} = (\Vert \mathbf{y} \Vert \cos\theta_0, \Vert \mathbf{y} \Vert \sin\theta_0, 0, ..., 0)^T$ with $\mathbf{u}, \mathbf{v} \in \mathbb{R}^m$. Following Cho \& Saul~\yrcite{cho2009kernel}, there exists some $m \times m$ rotation matrix $R$ such that $\mathbf{x} = R\mathbf{u}$ and $\mathbf{y} = R \mathbf{v}$. We have
\begin{align*}
k(\mathbf{x}, \mathbf{y}) &= \int_{\mathbb{R}^m}\Theta(\bm{\omega}\cdot R \mathbf{u}) \Theta(\bm{\omega}\cdot R \mathbf{v}) (\bm{\omega} \cdot R \mathbf{u}) (\bm{\omega} \cdot R \mathbf{v}) \\
& \quad f(\bm{\omega}) \,d\bm{\omega}.
\end{align*}
Let $\bm{\omega}=R \mathbf{w}$ and note that the dot product is invariant under rotations and the determinant of the Jacobian of the transformation is $1$ since $R$ is orthogonal. We have
\begin{align*}
k(\mathbf{x}, \mathbf{y}) &= \int_{\mathbb{R}^m}\Theta(\mathbf{w}\cdot  \mathbf{u}) \Theta(\mathbf{w}\cdot \mathbf{v}) (\mathbf{w} \cdot  \mathbf{u}) (\mathbf{w} \cdot  \mathbf{v}) \\
&\quad f(R \mathbf{w}) \,d\mathbf{w}, \\
&= \int_{\mathbb{R}^m}\Theta(\Vert \mathbf{x} \Vert w_1 ) \Theta(\Vert \mathbf{y} \Vert (w_1 \cos\theta_0 + w_2 \sin\theta_0)) \\
&\quad w_1 (w_1 \cos\theta_0 + w_2 \sin\theta_0)f( \mathbf{w}) \,d\mathbf{w} \Vert \mathbf{x} \Vert \Vert \mathbf{y} \Vert \numberthis \label{eq:ker}.
\end{align*}
One may view the integrand as a functional acting on test functions of $\theta_0$. Denote the set of infinitely differentiable test functions on $(0, \pi)$ by $C_c^\infty(0,\pi)$. The linear functional acting over $C_c^\infty(0,\pi)$ is a Generalized Function and we may take distributional derivatives under the integral by Theorem 7.40 of Jones~\yrcite{jones_1982}. Differentiating twice,
\begin{align*}
k'' &+ k \\
&= \int_{\mathbb{R}^m}\Theta(w_1 ) w_1 (-w_1 \sin\theta_0 + w_2 \cos\theta_0)^2 \\
&\quad \delta \big( w_1 \cos\theta_0 + w_2 \sin\theta_0 \big) f( \mathbf{w}) \,d\mathbf{w} \Vert \mathbf{x} \Vert \Vert \mathbf{y} \Vert, \\
&= \int_{\mathbb{R}^{m-1}} f \Big(  (s \sin\theta_0, -s\cos\theta_0, w_3, ..., w_m)^T \Big) \\
&\quad \Theta(s) s^3 \,ds\,dw_3\,dw_4...\,dw_m \Vert \mathbf{x} \Vert \Vert \mathbf{y} \Vert \sin\theta_0.
\end{align*}
The initial condition $k(\pi)=0$ is obtained by putting $\theta_0=\pi$ in~\eqref{eq:ker} and noting that the resulting integrand contains a factor of $\Theta(w_1)\Theta(-w_1)w_1$ which is $0$ everywhere. Similarly, the integrand of $k'(\pi)$ contains a factor of $\Theta(w_2)\Theta(-w_2)w_2$. 

The ODE is meant in a distributional sense, that 
$$\int_0^\pi \psi(\theta_0) \big( k''(\theta_0) + k(\theta_0) - F(\theta_0) \big)\,d\theta_0=0$$ 
$\forall \psi \in C_c\infty(0, \pi)$, where $k$ is a distribution with a distributional second derivative $k''$.
\end{proof}

\section{Proof of Proposition~\ref{theorem}}
\label{app:theorem}
\begin{proof}
Denote the marginal PDF of the first two coordinates of $\mathbf{W}$ by $f_{12}.$ Due to the rotational invariance of $f$, $f(O\mathbf{x})= f(\Vert \mathbf{x} \Vert) = f(\mathbf{x})$ for any orthogonal matrix $O$. So
\begin{align*}
F(\theta_0) &= \int_{\mathbb{R}^{m-1}} f\big( (s \sin\theta_0, -s\cos\theta_0, w_3, ..., w_m)^T\big) \\
&\quad  \sin\theta_0 \Theta(s) s^3 \,ds\,dw_3,...\,dw_m \Vert \mathbf{x} \Vert \Vert \mathbf{y} \Vert, \\
&= \sin\theta_0 \int_{\mathbb{R}} \Theta(s) s^3 f_{12} \big( (s, 0,)^T \big) \,ds \Vert \mathbf{x} \Vert \Vert \mathbf{y} \Vert, \\
&= K \sin\theta_0, \quad  K \in (0, \infty].
\end{align*}
It remains to check that $K < \infty$. $F$ is integrable since
\begin{align*}
&\quad\int_{\mathbb{R}^2}\int_0^\pi \Theta(w_1) w_1 (-w_1\sin\theta_0+w_2\cos\theta_0)^2 \\
&\quad\delta(w_1\cos\theta_0+w_2\sin\theta_0)f_{12}(w_1, w_2) d\theta_0 dw_1 dw_2 \\
&= \int_{\mathbb{R}^2}\Theta(w_1)w_1 \big| (w_1^2 + w_2^2)^{1/2} \big| f_{12}(w_1, w_2) dw_1 dw_2, \\
&\leq \sqrt{\mathbb{E} \big[ \Theta^2(W_1)W_1^2 \big]} \sqrt{ \mathbb{E} \big[W_1^2 + W_2^2]} < \infty.
\end{align*}
Therefore, $F$ is finite almost everywhere. This is only true if $K <\infty$. $k''=F-k$ must be a function, so the distributional and classical derivatives coincide.
\end{proof}

\section{Proof of Theorem~\ref{thm:asymptotic}}
\label{app:asymptotic}
\begin{proof}
There exist some orthonormal $\mathbf{R_1}, \mathbf{R_2}{\in}\mathbb{R}^m$ such that $\mathbf{y}^{(m)}{=}\Vert \mathbf{y}^{(m)} \Vert (\mathbf{R_1} \cos\theta_0 + \mathbf{R_2}\sin\theta_0)$ and $\mathbf{x}^{(m)}=\Vert \mathbf{x}^{(m)} \Vert \mathbf{R_1}$. We would like to examine the asymptotic distribution of $\sigma \big(\Vert \mathbf{y}^{(m)} \Vert \mathbf{W}^{(m)}{\cdot}\big( \mathbf{R_1}\cos\theta_0{+}\mathbf{R_2}\sin\theta_0 \big) \big) \\ \sigma \big(\Vert \mathbf{x}^{(m)} \Vert \mathbf{W}^{(m)}{\cdot}\mathbf{R_1}\big).$

Let $U_1^{(m)}{=}\mathbf{W} \cdot \mathbf{R_1}\cos\theta_0 + \mathbf{W} \cdot \mathbf{R_2}\sin\theta_0$ and $\\U_2^{(m)}={-}\mathbf{W}{\cdot}\mathbf{R_1}\sin\theta_0{+}\mathbf{W}{\cdot}\mathbf{R_2}\cos\theta_0$. Note that $\mathbb{E}[U_1^{(m)2}]{=}\mathbb{E}[U_2^{(m)2}]{=}\mathbb{E}[W_i^2]$ and $\mathbb{E}[U^{(m)}_1]{=}\mathbb{E}[U^{(m)}_2]{=}0$. Also note that $U^{(m)}_1$ and $U^{(m)}_2$ are uncorrelated since $\mathbb{E}[U^{(m)}_1U^{(m)}_2]=\mathbb{E} \Big[ (\mathbf{W} \cdot \mathbf{R_1})(\mathbf{W} \cdot \mathbf{R_2})(\cos^2\theta_0 + \sin^2\theta_0)-\cos\theta_0\sin\theta_0\big( (\mathbf{W} \cdot \mathbf{R_1})^2 - (\mathbf{W} \cdot \mathbf{R_2})^2 \big) \Big]=0$.

Let $M_k= \mathbb{E} \big| W_i^k \big|$, $\mathbf{U}^{(m)}=(U_1, U_2)^T$, $I$ be the $2\times2$ identity matrix and $\mathbf{Q} \sim N\big(\mathbf{0}, M_2 I\big)$. Then for any convex set $S \in \mathbb{R}^2$ and some $C \in \mathbb{R}$, by the Berry-Esseen Theorem, \\ $\big| \mathbb{P}[\mathbf{U} \in S] - \mathbb{P}[\mathbf{Q} \in S] \big|^2 \leq C \gamma^2$ where $\gamma^2$ is given by
\begin{align*}
&\Big( \sum_{j=1}^m \mathbb{E} \Big\Vert M_2^{\frac{-1}{2}} W_i \, I \begin{pmatrix}
R_{1j}\cos\theta_0+R_{2j}\sin\theta_0 \\
-R_{1j}\sin\theta_0 + R_{2j}\cos\theta_0
\end{pmatrix}\Big\Vert^3 \Big)^2, \\
&=\Big( M_2^{\frac{-3}{2}} M_3\sum_{j=1}^m \mathbb{E} \Big\Vert \begin{pmatrix}
R_{1j}\cos\theta_0+R_{2j}\sin\theta_0 \\
-R_{1j}\sin\theta_0 + R_{2j}\cos\theta_0
\end{pmatrix}\Big\Vert^3 \Big)^2, \\
&= \Big(  M_2^{\frac{-3}{2}} M_3 \sum_{j=1}^m \Big| R_{1j}^2+R_{2j}^2 \Big|^{(3/2)} \Big)^2, \\
&\leq  M_2^{-3} M_3^2 m\sum_{j=1}^m \Big| R_{1j}^2+R_{2j}^2 \Big|^3,\\
&= M_2^{-3} M_3^2 m\sum_{j=1}^m \Big| R_{1j}^6+3R_{1j}^4 R_{2j}^2+3R_{1j}^2 R_{2j}^4 + R_{2j}^6 \Big|,\\
&\leq M_2^{-3} M_3^2 m \Big( 4\max_{k=1}^mR_{1k}^4 + 4 \max_{k=1}^m R_{2k}^4\Big).
\end{align*}
The last line is due to the fact that 
\begin{align*}
&\sum_{j=1}^m \Big| R_{1j}^6+3R_{1j}^4 R_{2j}^2 \Big| \leq \max_{k=1}^mR_{1k}^4 \big( \sum_{j=1}^m R_{1j}^2+3R_{2j}^2 \big).
\end{align*}
\iffalse
\begin{align*} &\quad \big| \mathbb{P}[\mathbf{U} \in S] - \mathbb{P}[\mathbf{Q} \in S] \big|^2 \\
&\leq \Big( C  \sum_{j=1}^m \mathbb{E} \Big[ \big\Vert \mathbb{E}[W_i^2]^{-1/2}I \\
&\quad W_i \begin{pmatrix}
R_{1j}\cos\theta_0+R_{2j}\sin\theta_0 \\
-R_{1j}\sin\theta_0 + R_{2j}\cos\theta_0
\end{pmatrix}\big\Vert^3 \Big] \Big)^2, \\
&= \Big( C \mathbb{E}\Big[W_i^2\Big]^{(-3/2)} \mathbb{E}\Big|W_i^3\Big| \sum_{j=1}^m \Big( R_{1j}^2+R_{2j}^2 \Big)^{(3/2)} \Big)^2, \\
&\leq C^2 \mathbb{E}\Big[W_i^2\Big]^{-3} \mathbb{E}\Big|W_i^3\Big|^2 m\sum_{j=1}^m \Big( R_{1j}^2+R_{2j}^2 \Big)^3,\\
&\leq C^2 \mathbb{E}\Big[W_i^2\Big]^{-3} \mathbb{E}\Big|W_i^3\Big|^2 m \Big( 4 \max_{j=1}^mR_{1j}^4 + 4\max_{j=1}^m R_{2j}^4\Big).
\end{align*}
\fi
Now $R_{1k} = \frac{x_k}{\Vert \mathbf{x} \Vert}$ and $R_{2k}=\frac{1}{\sin\theta_0}\Big(\frac{y_k}{\Vert \mathbf{y} \Vert} - \frac{x_k}{\Vert \mathbf{x} \Vert} \cos\theta_0 \Big)$, so if $\theta_0 \neq 0, \pi$ by Hypothesis~\ref{hyp} $\mathbf{U}^{(m)}$ converges \emph{in distribution} to the bivariate spherical Gaussian with variance $\mathbb{E}[W_i^2]$. Then the random vector $\mathbf{Z}^{(m)}=( Z_1^{(m)},Z_2^{(m)})^T= \big( \Vert \mathbf{x} \Vert \mathbf{W} \cdot \mathbf{R_1}, \Vert \mathbf{y} \Vert (\mathbf{W} \cdot \mathbf{R_1}\cos\theta_0 + \mathbf{W} \cdot \mathbf{R_2}\sin\theta_0) \big)^T=\big(\Vert \mathbf{x} \Vert (U_1\cos\theta_0 - U_2\sin\theta_0), \Vert \mathbf{y} \Vert U_1 \big)^T$ converges \emph{in distribution} to the bivariate Gaussian random variable with covariance matrix 
$ \mathbb{E}[W_i^2]
\begin{bmatrix}
    \Vert \mathbf{x} \Vert^2       & \Vert \mathbf{x} \Vert\Vert \mathbf{y} \Vert\cos\theta_0 \\
    \Vert \mathbf{x} \Vert\Vert \mathbf{y} \Vert\cos\theta_0       & \Vert \mathbf{y} \Vert^2
\end{bmatrix}$. Since $\sigma$ is continuous almost everywhere, by the Continuous Mapping Theorem, 
$$ \sigma(\mathbf{W}^{(m)} \cdot \mathbf{x}^{(m)})\sigma( \mathbf{W}^{(m)} \cdot \mathbf{y}^{(m)}) \xrightarrow{D} \sigma(Z_1) \sigma(Z_2).$$
If $\theta_0=0$ or $\theta_0=\pi$, we may treat $\mathbf{R_2}$ as $\mathbf{0}$ and the above still holds.
\end{proof}

\section{Proof of Corollary~\ref{cor:relucase}}
\label{app:reluasym}
\begin{proof}
We have $\lim_{m \to \infty} k^{(m)}_f \big( \mathbf{x}^{(m)},\mathbf{y}^{(m)} \big) = \lim_{m \to \infty} \mathbb{E} \big[ \sigma(Z_1^{(m)}) \sigma(Z_2^{(m)}) \big]$ and would like to bring the limit inside the expected value. By Theorem~\ref{thm:asymptotic} and Theorem 25.12 of Billingsley~\yrcite{Billingsley}, it suffices to show that $\sigma(Z_1^{(m)})\sigma(Z_2^{(m)})$ is uniformly integrable. Define $h$ to be the joint PDF of $\mathbf{Z}^{(m)}$. We have
\begin{align*}
&\quad \lim_{\alpha \to \infty} \int_{| \sigma(z_1) \sigma(z_2)| > \alpha} |\sigma(z_1) \sigma(z_2) | h(z_1, z_2) \, dz_1 dz_2 \\
&= \lim_{\alpha \to \infty} \int_{| \Theta(z_1) \Theta(z_2) z_1 z_2| > \alpha} | \Theta(z_1) \Theta(z_2) z_1 z_2 | h(z_1, z_2) \\
&\quad dz_1 dz_2,
\end{align*}
but the integrand is $0$ whenever $z_1 \leq 0$ \emph{or} $z_2 \leq 0$. So 
\begin{align*}
&\quad\int_{| \sigma(z_1) \sigma(z_2)| > \alpha} |\sigma(z_1) \sigma(z_2) | h(z_1, z_2) \, dz_1 dz_2 \\
&= \int_{ \mathbb{R}^2 } z_1 z_2 \Theta(z_1z_2 - \alpha) \Theta(z_1) \Theta(z_2) h(z_1, z_2) \, dz_1 dz_2.
\end{align*}
We may raise the Heaviside functions to any power without changing the value of the integral. Squaring the Heaviside functions and applying H\"{o}lder's inequality, we have
\begin{align*}
&\Big( \int_{ \mathbb{R}^2 } z_1 z_2 \Theta^2(z_1z_2 - \alpha) \Theta^{2}(z_1) \Theta^{2}(z_2) h(z_1, z_2) dz_1 dz_2 \Big)^2 \\
&\leq\mathbb{E}[z_1^2 \Theta(z_1z_2 - \alpha) \Theta(z_1) \Theta(z_2) ] \\
&\quad\mathbb{E}[z_2^2 \Theta(z_1z_2 - \alpha) \Theta(z_1) \Theta(z_2) ].
\end{align*}
Examining the first of these factors, 
\begin{align*}
&\quad \int_0^\infty \int_{\alpha/z_1}^\infty z_1^2 h(z_1, z_2) \,dz_2 dz_1, \\
&= \int_0^\infty z_1^2 \int_{\alpha/z_1}^\infty h(z_1, z_2) \,dz_2 dz_1. 
\end{align*}
Now let $g_\alpha(z_1) = \int_{\alpha/z_1}^\infty h(z_1, z_2) \,dz_2$. $g_\alpha(z_1)z_1^2$ is monotonically pointwise non-increasing to $0$ in $\alpha$ for all $z_1 > 0$ and $\int z_1^2 g_0(z_1) dz_1 \leq \mathbb{E}[Z_1^2] < \infty $ . By the Monotone Convergence Theorem $\lim_{\alpha \to \infty} \mathbb{E}[z_1^2 \Theta(z_1z_2 - \alpha) \Theta(z_1) ] =0$. The second factor has the same limit, so the limit of the right hand side of H\"{o}lder's inequality is $0$.
\end{proof}

%\endgroup
\newpage
\section*{Acknowledgements}
We thank the anonymous reviewers for directing us toward relevant work and providing helpful recommendations regarding the presentation of the paper. Farbod Roosta-Khorasani gratefully acknowledges the support from the Australian Research Council through a Discovery Early Career Researcher Award (DE180100923). Russell Tsuchida's attendance at the conference was made possible by an ICML travel award.
\bibliography{kernelnn}
\bibliographystyle{icml2018}

\newpage
\onecolumn
\appendix
\section*{Supplementary Material}
\section{Empirical Evaluation of Conditions in Hypothesis~\ref{hyp}}
We consider a sequence of datapoints of increasing $m$ by starting with a compressed low dimensional datapoint and decreasing the amount of compression, evaluating the asymptotic bound $m \max_{i=1}^m \Big| \frac{x^{(m)}_i}{\Vert \mathbf{x}^{(m)} \Vert} \Big|^4$ for each $m$. Figure~\ref{fig:asymptote2} shows plots of the asymptotic bound for two datasets.

\begin{figure}[!htbp]
\centering
\includegraphics[scale=0.16]{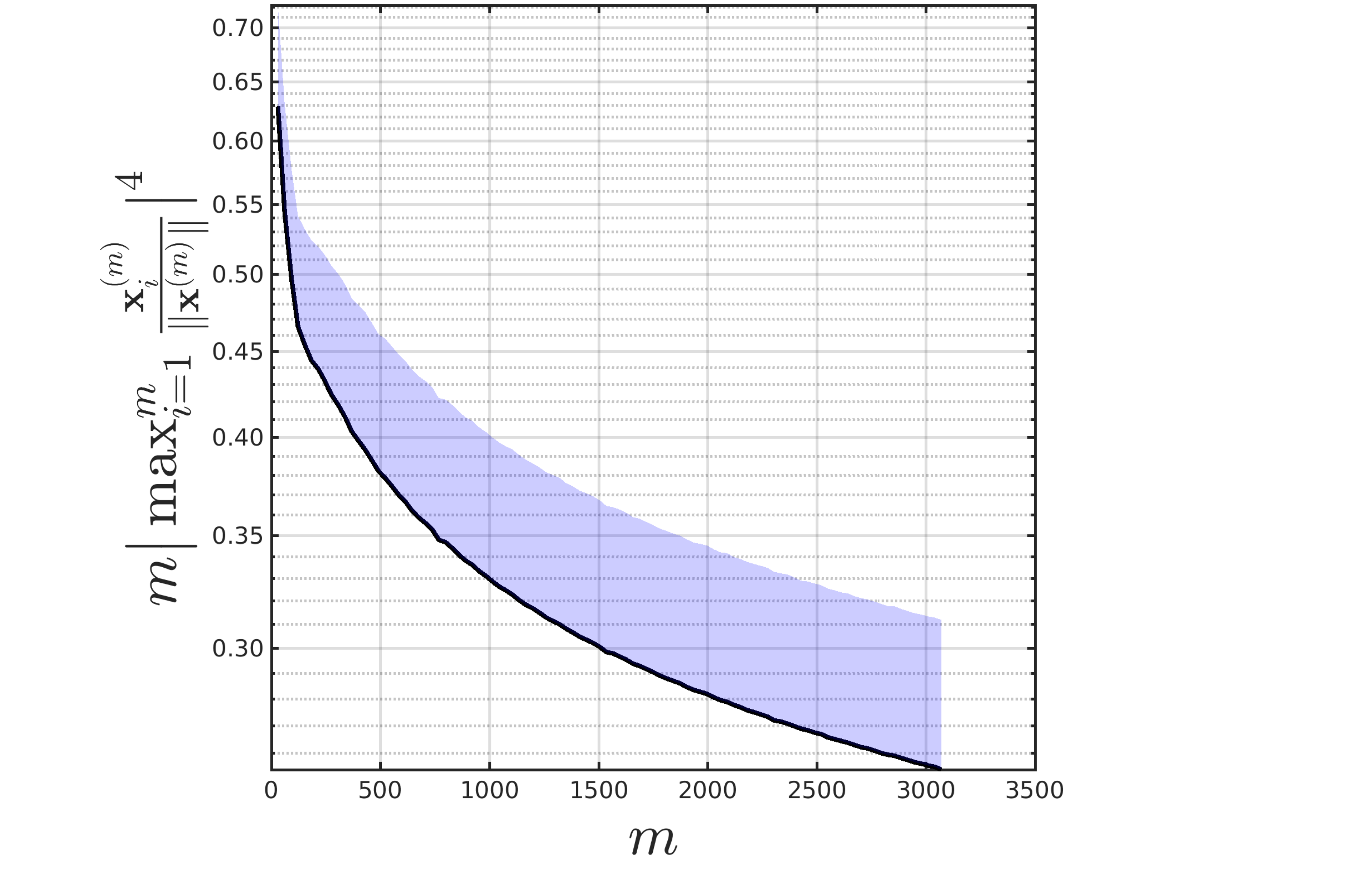}
\includegraphics[scale=0.16]{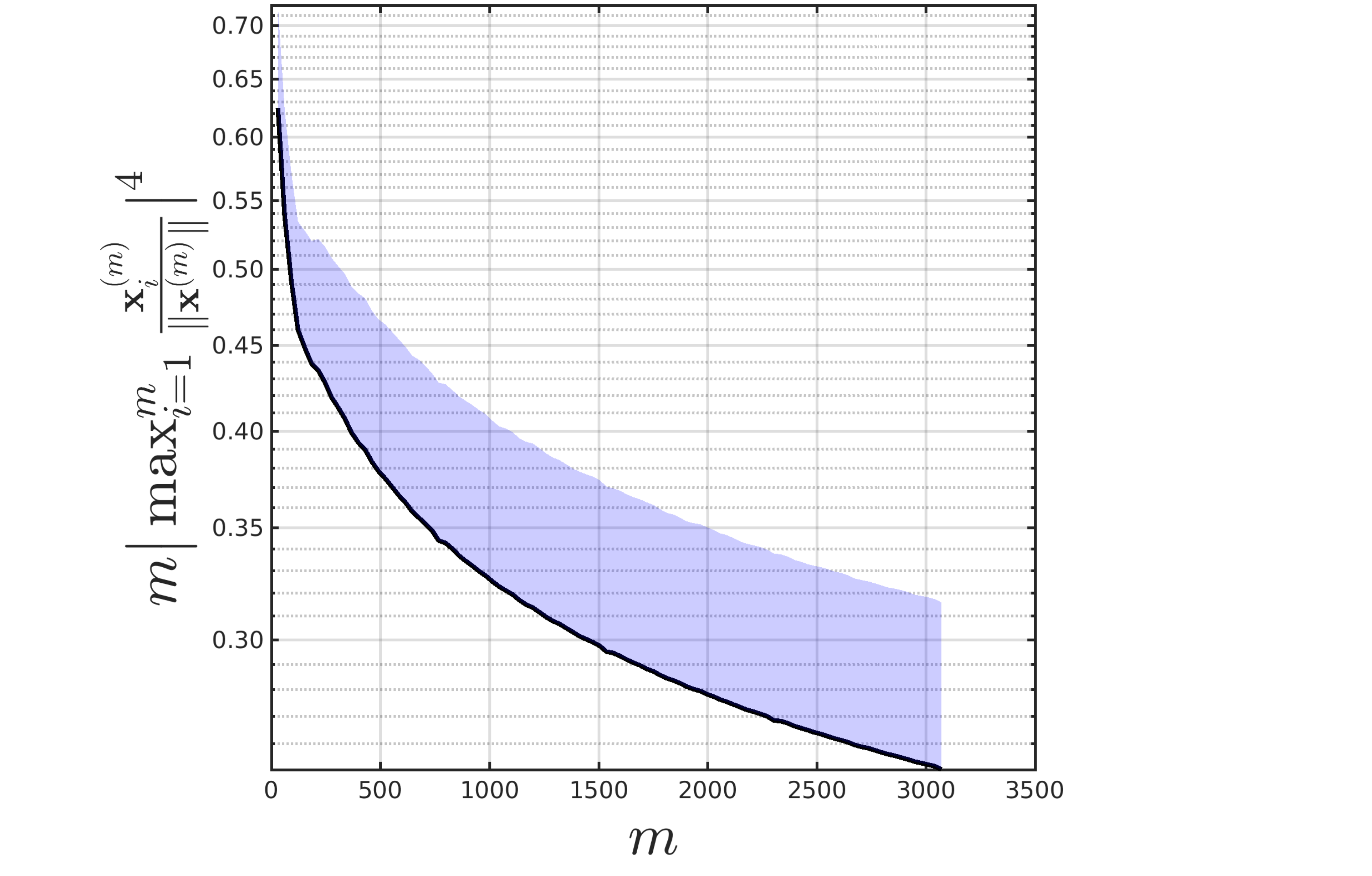}
\caption{Asymptotic error in the application of the CLT to neural network kernels. The solid line is an average over $1000$ randomly sampled datapoints and the shaded region represents $1$ standard deviation in the worst-case direction. Data is preprocessed so that each dimension is in the range $[0, 255]$. (Left) CIFAR10 and (Right) CIFAR100~\cite{krizhevsky2009learning}. The images are compressed using Bicubic Interpolation.} 
\label{fig:asymptote2}
\end{figure}

The plots suggest that Hypothesis~\ref{hyp} makes reasonable assumptions on high dimensional datasets.

\section{Proof of Proposition~\ref{cor:lrelukernel}}
\label{app:lrelu}
\begin{proof}
The LReLU activation function is $\sigma(z) = \big( a+(1-a)\Theta(z) \big)z$. Expanding, we have
\begin{align*}
k(\mathbf{x}, \mathbf{y}) &= \int_{\mathbb{R}^m} \sigma( \mathbf{w} \cdot \mathbf{x} ) \sigma( \mathbf{w} \cdot \mathbf{y} ) f(\mathbf{w}) \,\mathbf{dw}, \\
&= \int_{\mathbb{R}^m} \Big( a^2 + a(1-a)\Theta(\mathbf{w} \cdot \mathbf{y}) +  a(1-a)\Theta(\mathbf{w} \cdot \mathbf{x}) + (1-a)^2 \Theta(\mathbf{w} \cdot \mathbf{x}) \Theta(\mathbf{w} \cdot \mathbf{x}) \Big) (\mathbf{w} \cdot \mathbf{x})(\mathbf{w} \cdot \mathbf{y})f(\mathbf{w}) \,\mathbf{dw}.
\end{align*}
Using linearity of the integral, we have the superposition of the four integrals $k_1=a^2 \mathbb{E}\big[  (\mathbf{W} \cdot \mathbf{x})(\mathbf{W} \cdot \mathbf{y}) \big]$, $k_2=a(1-a)\mathbb{E}\big[  \Theta(\mathbf{W} \cdot \mathbf{x})(\mathbf{W} \cdot \mathbf{x})(\mathbf{W} \cdot \mathbf{y}) \big]$, $k_3=a(1-a)\mathbb{E}\big[  \Theta(\mathbf{W} \cdot \mathbf{y})(\mathbf{W} \cdot \mathbf{x})(\mathbf{W} \cdot \mathbf{y}) \big]$ and $k_4=(1-a)^2\mathbb{E}\big[  \Theta(\mathbf{W} \cdot \mathbf{x})\Theta(\mathbf{W} \cdot \mathbf{y})(\mathbf{W} \cdot \mathbf{x})(\mathbf{W} \cdot \mathbf{y}) \big]$.

Now $k_1(\mathbf{x}, \mathbf{y})=a^2 \mathbb{E}[W_i^2] \Vert \mathbf{x} \Vert \Vert \mathbf{y} \Vert \cos\theta_0$. To see this, rotate the coordinate system as before. Then, either solve the integral directly using the fact that the weights are uncorrelated or differentiate twice and solve the homogeneous IVP with initial conditions $k(0)=a^2\mathbb{E}[W_i^2]\Vert \mathbf{x} \Vert \Vert \mathbf{y} \Vert$ and $k'(0)=0$.

After rotating the coordinate system, differentiating $k_2(\mathbf{x}, \mathbf{y})$ twice results in a homogeneous IVP with $k(0)=a(1-a)\frac{\mathbb{E}[W_i^2]}{2}\Vert \mathbf{x} \Vert \Vert \mathbf{y} \Vert$ and $k'(0)=0$, the solution of which is $k_2(\mathbf{x}, \mathbf{y})=a(1-a)\frac{\mathbb{E}[W_i^2]}{2}\Vert \mathbf{x} \Vert \Vert \mathbf{y} \Vert\cos\theta_0$. Note that by symmetry, $k_2(\mathbf{x}, \mathbf{y})= k_3(\mathbf{x}, \mathbf{y})$.

The last remaining integral, $k_4(\mathbf{x}, \mathbf{y}),$ is just a multiple of the Arc-Cosine kernel.
\end{proof}

\section{Other Asymptotic Kernels}
\begin{cor}[Asmptotic Kernels: $1-\epsilon$ Exponent-Dominated Activation Functions]
\label{cor:epsiloncase}
Consider the same scenario as in Corollary~\ref{cor:relucase}, with the exception that the activation functions are replaced by some continuous $\sigma$ such that $|\sigma(z)| \leq M |z|^{1-\epsilon} $ for all $z \in \mathbb{R}$, some $\epsilon > 0$, and some $M \in (0, \infty),$ then for all $s \geq 2$
$$\lim_{m \to \infty} k^{(m)}_f \big( \mathbf{x}^{(m)},\mathbf{y}^{(m)} \big) = k^{(s)}_g \big( \mathbf{x}^{(s)},\mathbf{y}^{(s)} \big)=\mathbb{E} \big[ \sigma(Z_1) \sigma(Z_2) \big].$$
\end{cor}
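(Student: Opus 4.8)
The plan is to follow the template of the proof of Corollary~\ref{cor:relucase}: convert the convergence in distribution supplied by Theorem~\ref{thm:asymptotic} into convergence of expectations. Since $\sigma$ is continuous and $\mathbf{Z}^{(m)} \xrightarrow{D} \mathbf{Z}$ by Theorem~\ref{thm:asymptotic}, the Continuous Mapping Theorem gives $\sigma(Z_1^{(m)})\sigma(Z_2^{(m)}) \xrightarrow{D} \sigma(Z_1)\sigma(Z_2)$. By definition $k_f^{(m)} = \mathbb{E}[\sigma(Z_1^{(m)})\sigma(Z_2^{(m)})]$, and since any linear combination of independent Gaussians is exactly Gaussian, under the spherical Gaussian weights $g_s$ the vector $\mathbf{Z}^{(s)}$ is exactly the bivariate Gaussian $\mathbf{Z}$ for every $s \geq 2$, whence $k_g^{(s)} = \mathbb{E}[\sigma(Z_1)\sigma(Z_2)]$. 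Thus the full statement reduces to justifying $\lim_{m\to\infty}\mathbb{E}[\sigma(Z_1^{(m)})\sigma(Z_2^{(m)})] = \mathbb{E}[\sigma(Z_1)\sigma(Z_2)]$, which by Theorem 25.12 of Billingsley~\yrcite{Billingsley} follows once we establish that $\{\sigma(Z_1^{(m)})\sigma(Z_2^{(m)})\}_m$ is uniformly integrable.

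I would establish uniform integrability by the stronger route of $L^p$-boundedness for some $p > 1$, which implies uniform integrability via a Markov-type estimate. Using the growth hypothesis $|\sigma(z)| \leq M|z|^{1-\epsilon}$ and then Cauchy--Schwarz,
\begin{equation*}
\mathbb{E}\big[|\sigma(Z_1^{(m)})\sigma(Z_2^{(m)})|^p\big] \leq M^{2p}\,\mathbb{E}\big[|Z_1^{(m)}|^{p(1-\epsilon)}|Z_2^{(m)}|^{p(1-\epsilon)}\big] \leq M^{2p}\sqrt{\mathbb{E}\big[|Z_1^{(m)}|^{2p(1-\epsilon)}\big]\,\mathbb{E}\big[|Z_2^{(m)}|^{2p(1-\epsilon)}\big]}.
\end{equation*}
The crucial choice (for $\epsilon \in (0,1)$) is $p = \tfrac{1}{1-\epsilon} > 1$, which makes the exponent $2p(1-\epsilon)$ equal exactly $2$. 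Each factor is then a second moment of $Z_i^{(m)}$, which by Theorem~\ref{thm:asymptotic} equals $\mathbb{E}[W_i^2]\Vert \mathbf{x} \Vert^2$ (resp. $\mathbb{E}[W_i^2]\Vert \mathbf{y} \Vert^2$) for every $m$. Hence $\sup_m \mathbb{E}[|\sigma(Z_1^{(m)})\sigma(Z_2^{(m)})|^p] \leq M^{2p}\mathbb{E}[W_i^2]\Vert \mathbf{x} \Vert \Vert \mathbf{y} \Vert < \infty$, giving the required uniform integrability. When $\epsilon \geq 1$ the activation is bounded and uniform integrability is immediate.

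The main obstacle is precisely arranging the $L^p$ bound so that it calls only on second moments of the sums $Z_i^{(m)} = \mathbf{W}^{(m)}\cdot\mathbf{x}^{(m)}$. Because the weights are only assumed to have finite absolute third moment, I cannot expect to control moments of $Z_i^{(m)}$ of order exceeding $2$ uniformly in $m$; the second moment is special in that Theorem~\ref{thm:asymptotic} pins it to the constant $\mathbb{E}[W_i^2]\Vert \mathbf{x} \Vert^2$ independent of $m$. The sub-linear exponent $1-\epsilon < 1$ is exactly what creates the slack to pick $p>1$ while keeping $2p(1-\epsilon) \leq 2$; for the borderline linear growth of the ReLU (the $\epsilon = 0$ case) no such $p$ exists, which is why the proof of Corollary~\ref{cor:relucase} must instead exploit the one-sided support of the Heaviside factors. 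This contrast is what makes the sub-linear case both genuinely different from, and ultimately cleaner than, the ReLU argument.
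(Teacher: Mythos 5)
Your proposal is correct and follows essentially the same route as the paper's proof: reduce the claim to uniform integrability of $\sigma(Z_1^{(m)})\sigma(Z_2^{(m)})$ via Theorem 25.12 of Billingsley, then obtain a uniform-in-$m$ $L^p$ bound for some $p>1$ by combining the sub-linear growth bound with Cauchy--Schwarz and the fact that $\mathbb{E}\big[(Z_i^{(m)})^2\big]$ is pinned to the same value for every $m$. The only (cosmetic) difference is your exponent choice $p = 1/(1-\epsilon)$, which makes the post--Cauchy--Schwarz moment exactly of second order, whereas the paper takes $p = 1+\epsilon$ and controls $\mathbb{E}\big|Z_1^{(m)} Z_2^{(m)}\big|^{1-\epsilon^2}$ by the same second moments.
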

\begin{proof}
We have $\lim_{m \to \infty} k^{(m)}_f \big( \mathbf{x}^{(m)},\mathbf{y}^{(m)} \big) = \lim_{m \to \infty} \mathbb{E} \big[ \sigma(Z_1^{(m)}) \sigma(Z_2^{(m)}) \big]$ and we would like to bring the limit inside the expected value. By Theorem~\ref{thm:asymptotic} and Theorem 25.12 of Billingsley~\yrcite{Billingsley}, it suffices to show that $\sigma(Z_1)\sigma(Z_2)$ is uniformly integrable. Define $h$ to be the joint PDF of $\mathbf{Z}$. As in (25.13) of Billingsley~\yrcite{Billingsley}, we have
\begin{align*}
\lim_{\alpha \to \infty} \int_{ |\sigma(z_1) \sigma(z_2)| > \alpha } |\sigma(z_1) \sigma(z_2)| h(z_1, z_2) \,dz_1 dz_2 & \leq \lim_{\alpha \to \infty} \frac{1}{\alpha^\epsilon} \mathbb{E} \Big[ \big| \sigma(Z_1) \sigma(Z_2) \big|^{1+\epsilon} \Big],
\end{align*}
so it suffices to show that $\mathbb{E} \Big[ \big| \sigma(Z_1) \sigma(Z_2) \big|^{1+\epsilon} \Big]$ is bounded. We have
\begin{align*}
\mathbb{E}\Big[ \big|\sigma(Z_1^{(m)}) \sigma(Z_2^{(m)}) \big|^{1+\epsilon}\Big] &\leq  M^2 \mathbb{E} \Big[  \big| Z_1^{(m)} Z_2^{(m)} \big| \Big],\\ 
&\leq  M^2 \sqrt{ \mathbb{E} \Big[  \big( Z_1^{(m)} \big)^2 \Big] \mathbb{E} \Big[  \big( Z_2^{(m)} \big)^2\Big] }, \\
&= M^2 \mathbb{E}[W_i^2] \Vert \mathbf{x} \Vert \Vert \mathbf{y} \Vert < \infty,
\end{align*}
and so \begin{align*}
\lim_{m\to \infty}k^{(m)}_f(\mathbf{x}, \mathbf{y}) =\mathbb{E} \big[  \lim_{m \to \infty}  \sigma(Z_1^{(m)}) \sigma(Z_2^{(m)}) \big] = \mathbb{E} \big[ \sigma(Z_1) \sigma(Z_2) \big].
\end{align*}
\end{proof}

\begin{cor}[Asymptotic Kernels: Bounded and Continuous Activation Functions]
\label{cor:bounded}
Consider the same scenario as in Corollary~\ref{cor:relucase}, with the exception that the activation functions are replaced by some bounded, continuous $\sigma$. Then for all $s \geq 2$
$$\lim_{m \to \infty} k^{(m)}_f \big( \mathbf{x}^{(m)},\mathbf{y}^{(m)} \big) = k^{(s)}_g \big( \mathbf{x}^{(s)},\mathbf{y}^{(s)} \big)=\mathbb{E} \big[ \sigma(Z_1) \sigma(Z_2) \big].$$
\end{cor}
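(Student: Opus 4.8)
The plan is to reduce the statement to the interchange of a limit and an expectation, exactly as in the proofs of Corollaries~\ref{cor:relucase} and~\ref{cor:epsiloncase}, and then to observe that boundedness makes the required integrability condition trivial. By the definition of the kernel in~\eqref{kernel} we have $k^{(m)}_f(\mathbf{x}^{(m)},\mathbf{y}^{(m)}) = \mathbb{E}[\sigma(Z_1^{(m)})\sigma(Z_2^{(m)})]$, so the left-hand side is $\lim_{m\to\infty}\mathbb{E}[\sigma(Z_1^{(m)})\sigma(Z_2^{(m)})]$. Theorem~\ref{thm:asymptotic} already supplies $\sigma(Z_1^{(m)})\sigma(Z_2^{(m)}) \xrightarrow{D} \sigma(Z_1)\sigma(Z_2)$, so the only remaining task is to justify passing the limit inside the expectation.

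First I would invoke Theorem 25.12 of Billingsley~\yrcite{Billingsley} as before: convergence in distribution together with uniform integrability yields convergence of the expectations. Since $\sigma$ is bounded, say $|\sigma(z)|\leq B$ for all $z$, the sequence $\sigma(Z_1^{(m)})\sigma(Z_2^{(m)})$ is dominated by $B^2$ uniformly in $m$ and is therefore trivially uniformly integrable. This immediately gives $\lim_{m\to\infty}\mathbb{E}[\sigma(Z_1^{(m)})\sigma(Z_2^{(m)})] = \mathbb{E}[\sigma(Z_1)\sigma(Z_2)]$. Equivalently, one could bypass the uniform-integrability machinery altogether: the map $(z_1,z_2)\mapsto \sigma(z_1)\sigma(z_2)$ is a bounded continuous function on $\mathbb{R}^2$, and since $\mathbf{Z}^{(m)}\xrightarrow{D}\mathbf{Z}$ is established in the proof of Theorem~\ref{thm:asymptotic}, the Portmanteau characterization of weak convergence yields the convergence of expectations directly.

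Finally I would establish the second equality, namely that the Gaussian kernel $k^{(s)}_g(\mathbf{x}^{(s)},\mathbf{y}^{(s)})$ equals $\mathbb{E}[\sigma(Z_1)\sigma(Z_2)]$ for every fixed $s\geq 2$ with no limit required. For spherical Gaussian weights $g_s$, each coordinate $Z_i^{(s)}$ is a finite linear combination of jointly Gaussian variables and hence exactly Gaussian; and since Theorem~\ref{thm:asymptotic} records that every $\mathbf{Z}^{(s)}$ shares the mean $\mathbf{0}$ and the stated covariance matrix with $\mathbf{Z}$, the vector $\mathbf{Z}^{(s)}_g$ is exactly distributed as $\mathbf{Z}$. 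Thus $k^{(s)}_g(\mathbf{x}^{(s)},\mathbf{y}^{(s)}) = \mathbb{E}[\sigma(Z_1)\sigma(Z_2)]$ identically in $s$, which together with the limit above completes the chain of equalities.

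The main obstacle in the unbounded cases was controlling the tails of $\sigma(Z_1^{(m)})\sigma(Z_2^{(m)})$ well enough to secure uniform integrability (via a monotone convergence argument for ReLU, and via an $L^{1+\epsilon}$ bound for the exponent-dominated case). Here that obstacle disappears entirely: boundedness of $\sigma$ makes uniform integrability immediate, so the only care required is the routine bookkeeping that ties $\mathbf{Z}^{(m)}\xrightarrow{D}\mathbf{Z}$ to the continuity and boundedness of the product functional. I therefore expect no genuine difficulty, making this the most routine of the asymptotic corollaries.
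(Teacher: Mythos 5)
Your proposal is correct and takes essentially the same route as the paper: the paper's proof is a one-line application of the Portmanteau Lemma to the convergence in distribution from Theorem~\ref{thm:asymptotic}, which is precisely the alternative you note (applying the bounded continuous map $(z_1,z_2)\mapsto\sigma(z_1)\sigma(z_2)$ to $\mathbf{Z}^{(m)}\xrightarrow{D}\mathbf{Z}$), and your uniform-integrability framing via Billingsley's Theorem 25.12 is a trivially equivalent variant. If anything, your write-up is slightly more careful than the paper's, since you state Portmanteau on the vector $\mathbf{Z}^{(m)}$ rather than on the product sequence and you explicitly verify the second equality $k^{(s)}_g=\mathbb{E}[\sigma(Z_1)\sigma(Z_2)]$, which the paper leaves implicit.
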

\begin{proof}
This is a direct application of the Portmanteau Lemma to the result in Theorem~\ref{thm:asymptotic}:  
\begin{align*}
&\qquad\quad \Big[\sigma(\mathbf{W}^{(m)} \cdot \mathbf{x}^{(m)})\sigma( \mathbf{W}^{(m)} \cdot \mathbf{y}^{(m)}) \xrightarrow{D} \sigma(Z_1) \sigma(Z_2)\Big] \\
&\implies \Big[ \mathbb{E}\big[ \sigma(\mathbf{W}^{(m)} \cdot \mathbf{x}^{(m)})\sigma( \mathbf{W}^{(m)} \cdot \mathbf{y}^{(m)}) \big] \to \mathbb{E} \big[ \sigma(Z_1)\sigma( Z_2) \big]  \Big]
\end{align*}
for all bounded, continuous $\sigma$.
\end{proof}

\begin{cor}[Asymptotic Kernels: LReLU]
\label{cor:lrelucase}
Consider the same scenario as in Corollary~\ref{cor:relucase}, with the exception that the activation functions are replaced by the Leaky ReLU $\sigma(z)=\Theta(z)z + a\Theta(-z)z, \quad a \in (0, 1).$ Then for all $s \geq 2$
$$\lim_{m \to \infty} k^{(m)}_f \big( \mathbf{x}^{(m)},\mathbf{y}^{(m)} \big) = k^{(s)}_g \big( \mathbf{x}^{(s)},\mathbf{y}^{(s)} \big)=\mathbb{E} \big[ \sigma(Z_1) \sigma(Z_2) \big].$$
\end{cor}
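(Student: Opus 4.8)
The plan is to follow the template established for the ReLU case in Appendix~\ref{app:reluasym} and for the $1-\epsilon$ exponent-dominated case in Corollary~\ref{cor:epsiloncase}: start from the convergence in distribution supplied by Theorem~\ref{thm:asymptotic}, namely $\sigma(Z_1^{(m)})\sigma(Z_2^{(m)}) \xrightarrow{D} \sigma(Z_1)\sigma(Z_2)$, and upgrade it to convergence of expectations by verifying uniform integrability and invoking Theorem 25.12 of Billingsley. Since $k_f^{(m)} = \mathbb{E}[\sigma(Z_1^{(m)})\sigma(Z_2^{(m)})]$, this yields $\lim_{m\to\infty}k_f^{(m)} = \mathbb{E}[\sigma(Z_1)\sigma(Z_2)]$. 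The reason LReLU needs a separate treatment is that it grows exactly linearly: since $a\in(0,1)$ one has $|\sigma(z)| \le |z|$, so the bound $|\sigma(z)| \le M|z|^{1-\epsilon}$ of Corollary~\ref{cor:epsiloncase} fails at the borderline $\epsilon=0$ and the second-moment argument there no longer closes. This is precisely where the finite third absolute moment assumption $\mathbb{E}|W_i^3|<\infty$ must be spent.

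The main work is the uniform integrability of $\{\sigma(Z_1^{(m)})\sigma(Z_2^{(m)})\}$. Because $|\sigma(z_1)\sigma(z_2)| \le |z_1 z_2|$, it suffices to show that $\{Z_1^{(m)}Z_2^{(m)}\}$ is uniformly integrable, for which I would establish the stronger $\sup_m \mathbb{E}|Z_1^{(m)}Z_2^{(m)}|^{3/2} < \infty$. By the Cauchy--Schwarz inequality this reduces to a uniform bound on the third absolute moments $\mathbb{E}|Z_1^{(m)}|^3$ and $\mathbb{E}|Z_2^{(m)}|^3$. Writing $Z_1^{(m)} = \sum_{j=1}^m x_j^{(m)} W_j$ as a sum of independent, mean-zero terms with $\sum_j (x_j^{(m)})^2 = \Vert \mathbf{x}\Vert^2$, a standard moment inequality (Rosenthal, or Marcinkiewicz--Zygmund) gives $\mathbb{E}|Z_1^{(m)}|^3 \le C\big[(\Vert\mathbf{x}\Vert^2 \mathbb{E}[W_i^2])^{3/2} + \mathbb{E}|W_i^3|\sum_j |x_j^{(m)}|^3\big]$, and $\sum_j |x_j^{(m)}|^3 \le \Vert\mathbf{x}\Vert^2 \max_j |x_j^{(m)}| = \Vert\mathbf{x}\Vert^3 \max_j |x_j^{(m)}|/\Vert\mathbf{x}\Vert$. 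Hypothesis~\ref{hyp} forces $\max_j |x_j^{(m)}|/\Vert\mathbf{x}\Vert \to 0$, so this quantity is bounded uniformly in $m$; the same estimate applies to $Z_2^{(m)}$. Hence $\sup_m \mathbb{E}|Z_1^{(m)}|^3 < \infty$ and uniform integrability follows.

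With uniform integrability in hand, Theorem 25.12 of Billingsley combined with Theorem~\ref{thm:asymptotic} gives $\lim_{m\to\infty}k_f^{(m)} = \mathbb{E}[\sigma(Z_1)\sigma(Z_2)]$. It then remains to identify the right-hand side with $k_g^{(s)}$ for every $s\ge 2$. For Gaussian weights in any fixed dimension $s$, the vector $\mathbf{Z}^{(s)} = (\mathbf{W}\cdot\mathbf{x}^{(s)}, \mathbf{W}\cdot\mathbf{y}^{(s)})^T$ is an exact linear image of a Gaussian vector, hence itself exactly bivariate Gaussian with mean $\mathbf{0}$ and the covariance matrix of Theorem~\ref{thm:asymptotic}; this distribution depends only on $\Vert\mathbf{x}\Vert$, $\Vert\mathbf{y}\Vert$, $\theta_0$ and $\mathbb{E}[W_i^2]$, all held fixed across $s$ by Hypothesis~\ref{hyp}. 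Therefore $k_g^{(s)} = \mathbb{E}[\sigma(Z_1)\sigma(Z_2)]$ for all $s\ge 2$, and all three quantities coincide.

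The main obstacle is the uniform third-moment control of the projections in the second paragraph; this is the step that genuinely uses both $\mathbb{E}|W_i^3|<\infty$ and the decay rate built into Hypothesis~\ref{hyp}, and it is what distinguishes the linearly growing LReLU from the strictly sublinear activations of Corollary~\ref{cor:epsiloncase}. A convenient alternative that avoids invoking a named moment inequality is to use the decomposition $\sigma(z) = (1-a)\Theta(z)z + az$, which expresses $\sigma(Z_1)\sigma(Z_2)$ as a finite sum of products of ReLU and identity factors; uniform integrability of the whole then follows from uniform integrability of each summand, the pure ReLU--times--ReLU term being handled exactly as in Appendix~\ref{app:reluasym}.
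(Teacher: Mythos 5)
Your proposal is correct, but your main argument takes a genuinely different route from the paper's. The paper's proof is a two-line reduction: it expands $\sigma(Z_1^{(m)})\sigma(Z_2^{(m)})$ into the four quadrant terms $\Theta(\pm Z_1^{(m)})\Theta(\pm Z_2^{(m)})Z_1^{(m)}Z_2^{(m)}$ (with coefficients $1$, $a$, $a$, $a^2$), asserts that each term is uniformly integrable ``by the same argument as in Corollary~\ref{cor:relucase}'' (the H\"older-plus-Monotone-Convergence computation of Appendix~\ref{app:reluasym}), and closes with the fact that a finite linear combination of uniformly integrable families is uniformly integrable --- your final paragraph's alternative via $\sigma(z) = (1-a)\Theta(z)z + az$ is essentially this proof up to regrouping. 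Your main route instead establishes the quantitative bound $\sup_m \mathbb{E}\big|Z_1^{(m)}Z_2^{(m)}\big|^{3/2} < \infty$ via Rosenthal and Cauchy--Schwarz and then dominates $|\sigma(z_1)\sigma(z_2)| \leq |z_1 z_2|$. What each buys: the paper's argument is shorter and reuses the ReLU machinery verbatim, while yours yields an explicitly $m$-uniform moment bound, so the uniformity over the family indexed by $m$ is transparent --- the paper's Monotone Convergence step is carried out for the joint density $h$ of a single $\mathbf{Z}^{(m)}$, so uniformity in $m$ there is left implicit --- and your bound covers any activation of linear growth $|\sigma(z)| \leq C|z|$ in one stroke, exactly filling the $\epsilon = 0$ borderline left open by Corollary~\ref{cor:epsiloncase}. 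Two small attribution corrections: $\sum_j |x_j^{(m)}|^3 \leq \Vert \mathbf{x} \Vert^3$ holds trivially from $\max_j |x_j^{(m)}| \leq \Vert \mathbf{x} \Vert$, so the uniform-integrability step needs neither the $m^{1/4}$ decay rate of Hypothesis~\ref{hyp} nor (if one follows the paper's second-moment route) the assumption $\mathbb{E}|W_i^3| < \infty$; both of those hypotheses are genuinely spent in the Berry--Esseen argument underlying Theorem~\ref{thm:asymptotic}, not here.
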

\begin{proof}
As before, it suffices to show uniform integrability of the random variable $\sigma\big(Z_1^{(m)} \big)\sigma\big(Z_2^{(m)} \big)=\Theta \big(Z_1^{(m)} \big) \Theta \big(Z_2^{(m)} \big)Z_1^{(m)}Z_2^{(m)} + a\Theta \big(-Z_1^{(m)} \big) \Theta \big(Z_2^{(m)} \big)Z_1^{(m)}Z_2^{(m)} + \Theta \big(Z_1^{(m)} \big) \Theta \big(-Z_2^{(m)} \big)Z_1^{(m)}Z_2^{(m)} + a^2 \Theta \big(-Z_1^{(m)} \big) \Theta \big(-Z_2^{(m)} \big)Z_1^{(m)}Z_2^{(m)}$. Each of these terms taken individually is uniformly integrable by the same argument as in Corollary~\ref{cor:relucase}. A linear combination of uniformly integrable random variables is uniformly integrable. Thus the random variable is uniformly integrable and as before the result holds.
\end{proof}

\begin{cor}[Asymptotic Kernels: ELU]
Consider the same scenario as in Corollary~\ref{cor:relucase}, with the exception that the activation functions are replaced by the ELU $\sigma(z)=\Theta(z)z + \Theta(-z)(e^z-1).$ Then for all $s \geq 2$
$$\lim_{m \to \infty} k^{(m)}_f \big( \mathbf{x}^{(m)},\mathbf{y}^{(m)} \big) = k^{(s)}_g \big( \mathbf{x}^{(s)},\mathbf{y}^{(s)} \big)=\mathbb{E} \big[ \sigma(Z_1) \sigma(Z_2) \big].$$
\end{cor}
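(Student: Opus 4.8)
The plan is to reuse the template of Corollaries~\ref{cor:relucase} and~\ref{cor:lrelucase}: Theorem~\ref{thm:asymptotic} already gives $\sigma(Z_1^{(m)})\sigma(Z_2^{(m)}) \xrightarrow{D} \sigma(Z_1)\sigma(Z_2)$ for the ELU (which is continuous), so by Theorem 25.12 of Billingsley it suffices to prove that the family $\{\sigma(Z_1^{(m)})\sigma(Z_2^{(m)})\}_m$ is uniformly integrable in order to move the limit inside the expectation. The ELU is neither bounded nor dominated by $|z|^{1-\epsilon}$, so neither Corollary~\ref{cor:bounded} nor Corollary~\ref{cor:epsiloncase} applies directly; however, its structure as a ReLU plus a bounded correction makes the uniform integrability argument clean.

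First I would split the ELU as $\sigma = \sigma_R + \sigma_B$, with $\sigma_R(z) = \Theta(z)z$ the ReLU part and $\sigma_B(z) = \Theta(-z)(e^z - 1)$ the correction. The key observation is that $\sigma_B$ is bounded: since $e^z - 1 \in (-1, 0)$ for $z < 0$, we have $\lvert \sigma_B(z) \rvert \leq 1$ for every $z$, whereas $\lvert \sigma_R(z) \rvert \leq \lvert z \rvert$. Expanding the product yields
\[
\sigma(Z_1)\sigma(Z_2) = \sigma_R(Z_1)\sigma_R(Z_2) + \sigma_R(Z_1)\sigma_B(Z_2) + \sigma_B(Z_1)\sigma_R(Z_2) + \sigma_B(Z_1)\sigma_B(Z_2),
\]
and since a finite linear combination of uniformly integrable families is uniformly integrable (as used in Corollary~\ref{cor:lrelucase}), it suffices to handle the four terms separately.

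The term $\sigma_R(Z_1^{(m)})\sigma_R(Z_2^{(m)})$ is exactly the ReLU product, uniformly integrable by Corollary~\ref{cor:relucase}. The term $\sigma_B(Z_1^{(m)})\sigma_B(Z_2^{(m)})$ is bounded in absolute value by $1$, hence trivially uniformly integrable. For each cross term I would use the boundedness of $\sigma_B$ to dominate pointwise, e.g.\ $\lvert \sigma_R(Z_1^{(m)})\sigma_B(Z_2^{(m)}) \rvert \leq \lvert Z_1^{(m)} \rvert$, and then invoke the fact that domination by a uniformly integrable family preserves uniform integrability. The family $\{ \lvert Z_1^{(m)} \rvert \}_m$ is uniformly integrable because Theorem~\ref{thm:asymptotic} guarantees that every $Z_1^{(m)}$ shares the same second moment $\mathbb{E}[W_i^2]\Vert \mathbf{x} \Vert^2$, so it is bounded in $L^2$, and $L^2$-boundedness implies uniform integrability.

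The only mildly delicate point is the cross-term step: one must confirm that the dominating variables $Z_1^{(m)}$ and $Z_2^{(m)}$, although they change with $m$, form an $L^2$-bounded (hence uniformly integrable) family, so that the bounded factor $\sigma_B$ absorbs the change of distribution. Once uniform integrability of all four terms is established, Theorem 25.12 of Billingsley gives $\lim_{m \to \infty} k^{(m)}_f(\mathbf{x}^{(m)}, \mathbf{y}^{(m)}) = \mathbb{E}[\sigma(Z_1)\sigma(Z_2)]$; because the limiting vector $\mathbf{Z}$ depends only on $\Vert \mathbf{x} \Vert$, $\Vert \mathbf{y} \Vert$, $\theta_0$ and $\mathbb{E}[W_i^2]$, and not on $m$, this expectation equals $k^{(s)}_g(\mathbf{x}^{(s)}, \mathbf{y}^{(s)})$ for every $s \geq 2$, completing the proof.
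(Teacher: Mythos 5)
Your proposal is correct, and it follows the paper's skeleton exactly at the top level: the same four-term decomposition $\sigma = \sigma_R + \sigma_B$ (ReLU part plus bounded correction), convergence in distribution from Theorem~\ref{thm:asymptotic} via the Continuous Mapping Theorem, and Theorem 25.12 of Billingsley reducing everything to uniform integrability of the four products, with the pure ReLU term handled by Corollary~\ref{cor:relucase} and the pure bounded term handled trivially (the paper instead cites Corollary~\ref{cor:bounded} for that term, which is equivalent). Where you genuinely diverge is the cross terms: the paper squares the Heaviside indicators and applies H\"older's inequality to bound the tail $\mathbb{E}\big[z_1|e^{z_2}-1|\,\Theta(z_1)\Theta(-z_2)\Theta(z_1|e^{z_2}-1|-\alpha)\big]$ by the product of $\mathbb{E}\big[z_1^2\,\Theta(\cdots)\big]$, sent to $0$ by the Monotone Convergence Theorem as in Corollary~\ref{cor:relucase}, and $\mathbb{E}\big[(e^{z_2}-1)^2\,\Theta(\cdots)\big]\leq 1$; you instead dominate pointwise, $|\sigma_R(Z_1^{(m)})\sigma_B(Z_2^{(m)})|\leq |Z_1^{(m)}|$, and observe that $\{|Z_1^{(m)}|\}_m$ is uniformly integrable because Theorem~\ref{thm:asymptotic} fixes $\mathbb{E}\big[(Z_1^{(m)})^2\big]=\mathbb{E}[W_i^2]\Vert\mathbf{x}\Vert^2$ for every $m$, and $L^2$-boundedness implies uniform integrability, which survives domination. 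Your version is more elementary (no H\"older, no MCT for these terms) and, notably, makes the uniformity in $m$ explicit: the paper's MCT tail argument is phrased for a fixed joint density $h$ of $\mathbf{Z}^{(m)}$, leaving the supremum over $m$ implicit, whereas your uniform second-moment bound $\mathbb{E}\big[|Z_1^{(m)}|\mathbbm{1}_{|Z_1^{(m)}|>\alpha}\big]\leq \mathbb{E}[W_i^2]\Vert\mathbf{x}\Vert^2/\alpha$ is manifestly $m$-independent. What the paper's H\"older route buys in exchange is a template that reuses the machinery already set up in Corollary~\ref{cor:relucase} verbatim, and that generalizes to corrections that are merely $L^2$-bounded under the limiting law rather than uniformly bounded; for the ELU, where $|\sigma_B|\leq 1$, your shortcut is the cleaner proof. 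Your closing step, that the common limiting law of $\mathbf{Z}$ depends only on $\Vert\mathbf{x}\Vert$, $\Vert\mathbf{y}\Vert$, $\theta_0$, and $\mathbb{E}[W_i^2]$, so the limit equals $k_g^{(s)}$ for every $s\geq 2$, matches what the paper leaves implicit and is correct.
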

\begin{proof}
By Theorem~\ref{thm:asymptotic}, the random variable  $\sigma\big(Z_1^{(m)} \big)\sigma\big(Z_2^{(m)} \big)=\Theta \big(Z_1^{(m)} \big) \Theta \big(Z_2^{(m)} \big)Z_1^{(m)}Z_2^{(m)} + \Theta \big(Z_1^{(m)} \big)Z_1^{(m)}\Theta \big(-Z_2^{(m)} \big) \big( e^{Z_2^{(m)}}-1 \big) + \Theta \big(Z_2^{(m)} \big)Z_2^{(m)}\Theta \big(-Z_1^{(m)} \big) \big( e^{Z_1^{(m)}}-1 \big) + \Theta \big(-Z_1^{(m)} \big) \big( e^{Z_1^{(m)}}-1 \big)\Theta \big(-Z_2^{(m)} \big) \big( e^{Z_2^{(m)}}-1 \big)$ converges in distribution to $\sigma\big(Z_1 \big)\sigma\big(Z_2).$ Call these terms $T_1^{(m)}, T_2^{(m)}, T_3^{(m)},$ and  $T_4^{(m)}$ respectively. Due to linearity of the limit and $\mathbb{E}$, 
$$\lim_{m \to \infty} k^{(m)}_f \big( \mathbf{x}^{(m)},\mathbf{y}^{(m)} \big) = \lim_{m\to\infty} \mathbb{E} [T_1^{(m)}]+\lim_{m\to\infty} \mathbb{E} [T_2^{(m)}]+\lim_{m\to\infty} \mathbb{E} [T_3^{(m)}]+\lim_{m\to\infty} \mathbb{E} [T_4^{(m)}].$$
The first term converges by Corollary~\ref{cor:relucase}. The fourth term converges by Corollary~\ref{cor:bounded}. The quantity of interest for uniform integrability in the second and third term is the limit as $\alpha \to \infty$
\begin{align*}
\Big( \int_{\substack{z_1|e^{z_2}-1|>\alpha \\ z_1 > 0 \\ z_2 < 0}} z_1|e^{z_2}-1|h(z_1, z_2) \,dz_1 dz_2 \Big)^2 &= \Big( \mathbb{E} \big[ z_1 |e^{z_2}-1| \Theta(z_1) \Theta(-z_2) \Theta\big(z_1|e^{z_2}-1| - \alpha \big) \big] \Big)^2, \\
&\leq \mathbb{E} \Big[ z_1^2 \Theta(z_1) \Theta(-z_2) \Theta\big(z_1|e^{z_2}-1| - \alpha \big) \Big] \\
&\quad  \mathbb{E} \Big[ (e^{z_2}-1)^2  \Theta(z_1) \Theta(-z_2) \Theta\big(z_1|e^{z_2}-1| - \alpha \big) \Big],
\end{align*}
By the Monotone Convergence Theorem, the first factor evaluates as $0$ in the limit using the same argument as in Corollary~\ref{cor:relucase}. The second factor is at least bounded by $1$ because the argument of $\mathbb{E}$ is always less than $1$. So we have uniform integrability, and $\lim_{m\to\infty} \mathbb{E} [T_2^{(m)}]$ and $\lim_{m\to\infty} \mathbb{E} [T_3^{(m)}]$ converge.
\end{proof}

\section{Relation to Other Work}
From Theorem 6 of the main text, we have that
$$\lim_{m \to \infty} k_f^{(m)}(\theta_0) = \lim_{m \to \infty} \mathbb{E} \big[ \sigma(Z_1^{(m)}) \sigma(Z_2^{(m)}) \big], \quad (Z^{(m)}_1, Z^{(m)}_2)^T \xrightarrow{D} N\big(\mathbf{0}, \Sigma\big),$$
with $\Sigma =  \mathbb{E}[W_i^2]
\begin{bmatrix}
    \Vert \mathbf{x} \Vert^2       & \Vert \mathbf{x} \Vert\Vert \mathbf{y} \Vert\cos\theta_0 \\
    \Vert \mathbf{x} \Vert\Vert \mathbf{y} \Vert\cos\theta_0       & \Vert \mathbf{y} \Vert^2
\end{bmatrix}$.
If the limit could be moved inside the expectation, the right hand side would resemble the definition of a dual activation, given by Daniely et al.~\yrcite{daniely2016toward}, which follows naturally from the definition of the kernel for the special case of \emph{Gaussian} weights. We have shown that asymptotically for certain activation functions, the limit can indeed be moved inside the expectation and a large class of weight distributions may be treated as Gaussian. Therefore, much of the dual activation results apply to random neural networks operating on high dimensional data from a wide range of distributions.

\newpage
\end{document}